\def\eqref#1{equation~\ref{#1}}
\def\1{\bm{1}}
\DeclareMathAlphabet{\mathsfit}{\encodingdefault}{\sfdefault}{m}{sl}
\SetMathAlphabet{\mathsfit}{bold}{\encodingdefault}{\sfdefault}{bx}{n}
\theoremstyle{plain}
\newtheorem{theorem}{Theorem}[section]
\newtheorem{proposition}[theorem]{Proposition}
\newtheorem{lemma}[theorem]{Lemma}
\theoremstyle{definition}
\theoremstyle{remark}
\title{Optimal Embedding Guided Negative Sample Generation for Knowledge Graph Link Prediction}
\author{\name Makoto Takamoto \email makoto.takamoto@neclab.eu \\
        \addr NEC Laboratories Europe, Heidelberg, Germany\\
        \AND 
        \name Daniel O\~noro-Rubio, \\
        \addr NEC Laboratories Europe, Heidelberg, Germany \\
        \AND
        \name Wiem Ben Rim, \\ 
        \addr University College London, London, UK \\        
        \AND
        \name Takashi Maruyama, \\
        \addr NEC Laboratories Europe, Heidelberg, Germany \\
        \AND
        \name Bhushan Kotnis \\
        \addr Coresystems AG, Zurich, Switzerland\\}
\begin{document}
\maketitle
\begin{abstract}
Knowledge graph embedding (KGE) models encode the structural information of knowledge graphs to predicting new links. 
Effective training of these models requires distinguishing between positive and negative samples with high precision. 
Although prior research has shown that improving the quality of negative samples can significantly enhance model accuracy, identifying high-quality negative samples remains a challenging problem. 
This paper theoretically investigates the condition under which negative samples lead to optimal KG embedding and identifies a sufficient condition for an effective negative sample distribution. Based on this theoretical foundation, we propose \textbf{E}mbedding \textbf{MU}tation (\textsc{EMU}), a novel framework that \emph{generates} negative samples satisfying this condition, in contrast to conventional methods that focus on \emph{identifying} challenging negative samples within the training data. 
Importantly, the simplicity of \textsc{EMU} ensures seamless integration with existing KGE models and negative sampling methods. 
To evaluate its efficacy, we conducted comprehensive experiments across multiple datasets. The results consistently demonstrate significant improvements in link prediction performance across various KGE models and negative sampling methods. Notably, \textsc{EMU} enables performance improvements comparable to those achieved by models with embedding dimension five times larger. 
An implementation of the method and experiments are available at \url{https://github.com/nec-research/EMU-KG}. 
\end{abstract}

\section{Introduction}

Knowledge Graphs (KGs) are graph databases, consisting of a collection of facts about real-world entities that are represented in the form of (head, relation, tail)-triplets. 
With their logical structure reflecting human knowledge, KGs have proven themselves to be a crucial component of many intelligent systems that tackle complex tasks, such as question answering \citep{10.1145/3289600.3290956}, recommender systems \citep{9216015}, information extraction \citep{gashteovski-etal-2020-aligning}, machine reading \citep{weissenborn-etal-2018-jack}, and natural language processing, such as language modeling \citep{yang-mitchell-2017-leveraging, logan-etal-2019-baracks}, entity linking \citep{radhakrishnan-etal-2018-elden}, and question answering \citep{saxena-etal-2022-sequence}. 
Popular KGs such as Freebase \citep{freebase}, YAGO \citep{yago}, and WordNet \citep{Wordnet} have been instrumental in driving advancements in both academic research and industrial applications. 

One of the major challenges that KGs face is their incompleteness; there may be numerous factually correct relations between entities in the graph that are not covered. To address this issue, the task of link prediction has emerged as a fundamental research topic, aimed at filling in the missing links between entities in the graph. Among the various approaches to predicting these missing links, Knowledge Graph Embedding (KGE) methods have proven to be particularly effective. KGE methods encode entities and relations information into a low-dimensional embedding vector space, thus enabling link prediction using neural networks \citep{transe,distmult,complex,rotate,HAKE,Abboud2020BoxEAB,Zhu2021NeuralBN,Tran2022MEIMME}.

Various methods have been developed to improve the accuracy of KGE predictions. For instance, \citet{ruffinelli2020you} showed that using contrastive learning improves the model's prediction accuracy, irrespective of the embedding models used. However, to effectively train a model with contrastive learning, it is essential to prepare hard-negative samples that are sufficiently challenging for the model to avoid penalizing true triplets. Although there has been a significant amount of research into effective negative sampling methods \citep{transe,rotate,zhang2019nscaching,ahrabian-etal-2020-structure,zhang2021simple,lin2023hierarchiral,yao2023entity}, finding a powerful yet efficient negative sampling method remains an open problem in the research community. 

In this paper, we conduct a theoretical investigation into the conditions under which negative samples contribute to optimal embedding in knowledge graph embedding (KGE) models and identify a sufficient condition that the negative sample distribution must satisfy. Building on this theoretical insight, we propose \textbf{E}mbedding \textbf{MU}tation (\textsc{EMU}), a novel approach for the \emph{generation} of negative samples tailored to the KGE link prediction task.
Unlike conventional methods that focus on \emph{identifying} informative negative samples within the training dataset, \textsc{EMU} generates challenging negative samples for training triples by mutating their embedding vectors with components extracted from the target positive embedding vector. By manipulating the components of embedding vectors, \textsc{EMU} efficiently generates negative samples that satisfy the condition for optimal embedding in KGE link prediction tasks.
The simplicity of \textsc{EMU} facilitates seamless integration with existing KGE models and any negative sampling strategies. Through extensive experiments across various models and datasets, we demonstrate that \textsc{EMU} consistently delivers substantial performance improvements, highlighting its potential as a robust tool for link prediction. Notably, our experiments reveal that \textsc{EMU} enables models to achieve comparable performance to those with embedding dimensions five times larger, thereby reducing computational complexity.

In summary, our contributions are as follows:
\begin{itemize}
\item We theoretically derive and identify a condition for the negative sample distribution that leads to optimal KGE for link prediction tasks.
\item We introduce \textsc{EMU}, a novel negative sample generation method that satisfies the identified condition. \textsc{EMU} is compatible with existing KGE models and negative sampling methods, achieving performance comparable to models with significantly larger embedding dimensions.
\item We perform comprehensive experiments to validate \textsc{EMU}, demonstrating consistent performance improvements across diverse KGE models, datasets, and negative sampling strategies.
\end{itemize}

\section{Background and Notation}
\paragraph{Background}
Link prediction is a task that consists of finding new links among entities in a graph by leveraging existing entities and relations. 
Given a triple (head, relation, tail), one of the elements is omitted (e.g., (head, relation, ?)), and the model is required to predict the missing element to form a new valid triple
\footnote{
If either "head" or "tail" is omitted, the task is referred to as "entity prediction"; When the "relation" is omitted, it is denoted as "relation prediction". While this paper mainly discusses the "tail" prediction scenario for simplicity, the proposed method is applicable to other cases as well. 
}. 
KGE models have demonstrated their effectiveness for this task by learning to represent the knowledge graph structure in a vector space. During training, KGE methods rely on negative sampling techniques because KGs only contain information about positive links. Negative sampling plays a critical role in embedding learning by proposing samples that represent node pairs known \emph{not} to be connected, contrasting with positive samples, which represent connected node pairs. By incorporating negative samples, KGE models improve their ability to distinguish between positive and negative links, enhancing their predictive performance in link prediction tasks.
Various techniques have been proposed to propose high-quality negative samples. One of the most widely used methods is Uniform Sampling \citep{transe}, which corrupts positive triples by replacing either the head or the tail entity with a uniformly sampled alternative from the knowledge graph. However, this approach has notable limitations, as it often produces uninformative samples, leading to limited performance gains due to the potential risk treating unknown positive samples in the KGs as negatives. To address these shortcomings, alternative negative sampling methods have been developed to propose more challenging, or "hard," negative samples, e.g., \citep{ahrabian-etal-2020-structure}.

\paragraph{Notation}

We define a triplet as: $x = ( {\bf h, r, t})$ where $({\bf h, r, t})$ denote (head, relation, and tail). 
These triplets often consists of discrete concepts, such as (`Joe Biden', `president of', `USA'), or (`Tokyo', `capital of', `Japan'). To facilitate processing by machine learning models, they are usually mapped onto a continuous, low-dimensional latent space $\mathbf{z} \in \mathbb{R}^d$, where $d$ is the dimensionality of the latent space. The mapping is carried out using an embedding function $G$. For instance, to embedding of the head entity is given by $\mathbf{z}^h = G(h | \theta^h)$, where $\theta^h$ is the weight parameters of the embedding model. The feasibility of a triplet is then evaluated using a scoring function $S(z) = s$, where $z = (\mathbf{z}^h, \mathbf{z}^r, \mathbf{z}^t)$ is the latent representation of the input triple and $s$ is the computed score. The scoring function $S$ varies across methods; For example, TransE \citep{transe} uses the Euclidean distance, while DistMult \citep{distmult} uses the dot-product.

Training involves minimizing a contrastive loss function that leverages the score of the positive sample $s^+$ from the true triplet and the scores of negative samples $\{s_0, s_1, ...\}^-$, which are produced by corrupting the true triplet:

\begin{equation}
    \mathcal{L}(s^+ , \left \{ s_0, s_1, ... \right \}^-)
\end{equation}

The loss function is designed to increase the score of the true triple while decreasing the score of negative samples. Depending on the specific method, this optimization can involve minimizing or maximizing distances (as in TransE) or optimizing similarities (as in DistMult).

\section{Optimal Embedding for KGE and EMU\label{sec:method}}

In this section, we illustrate how the negative samples generated by \textsc{EMU} lead to near-optimal embedding for KGE link prediction problems. First, we introduce the principle of negative sampling for graph representation learning proposed by \citep{yang2020understanding}, and extend it to the KGE link prediction problems. Next, we provide a comprehensive description of \textsc{EMU}. Finally, we demonstrate that \textsc{EMU} generates negative samples that distribute isotropically around positive samples, which leads to the condition for the principle of negative sampling for KGE link prediction problems. 

\subsection{Optimal Embedding for KGE Representation Learning}

Drawing upon the theoretical framework proposed by \citep{yang2020understanding}, referred to as Y20 henceforth, we posit that the target node $v$ and the positive node ${u}$, each characterized by their respective embedding vectors, ${\bf u, v}$, are derived from the positive sample distribution: $p_d(u|v)$. An objective function for embedding is:
\begin{align}
   J^{(v)} = \mathbb{E}_{v \sim p_d(v)}[\mathbb{E}_{u \sim p_d(u|v)} \log \sigma({\bf u} \cdot {\bf v})
   + k \ \mathbb{E}_{u' \sim p_n(u'|v)} \log \sigma(-{\bf u'} \cdot {\bf v})], 
\end{align}
where $p_n(u|v)$ is the negative sample distribution and $\sigma(\cdot)$ the sigmoid function, and $k$ is a numerical constant; and its corresponding empirical risk for node v is:
\begin{align}
   J_{\rm T}^{(v)} &= \frac{1}{T} \sum^T_{i=1} \log \sigma({\bf u}_i \cdot {\bf v})
   + \frac{1}{T} \sum^{kT}_{i=1} \log \sigma(-{\bf u'}_i \cdot {\bf v}), 
   \label{eq:loss-optimal}
\end{align}
where $\{u_1,\cdots,u_T \}$ are sampled from $p_d(u|v)$ and $\{u'_1,\cdots,u'_k \}$ are sampled from $p_n(u|v)$. We set ${\bf \theta}=[{\bf u_0 \cdot v},\dots,{\bf u_{N-1} \cdot v}]$ as the parameters to be optimized, where $\{u_0,\cdots,u_{\rm N-1} \}$ are all the $N$ nodes in the graph. 

Under the assumption, Y20 derived the following equation, as an empirical realization of the covariance measure proved in Y20: 
\begin{align} 
    \label{eq:opt}
    \mathbb{E}[||(\theta_T - \theta^*)_u||^2] 
    = \frac{1}{T} \left(\frac{1}{p_d(u|v)} - 1 + \frac{1}{k p_n(u|v)} - \frac{1}{k} \right).
\end{align}
Here, $\theta_T$ and $\theta^*$ are the respective optimal parameters for $J^{(v)}_T$ and $J^{(v)}$. 
Based on this theoretical finding, Y20 proposed a principle of negative sampling which enables the optimal $J_{\rm T}^{(v)}$ to converge to the optimal $J^{(v)}$ for graph representation learning. 
We extend the idea to KG link prediction, assuming a KGE model "DistMult" \citep{distmult} which calculate the scoring function as: $s_{\rm DistMult} \equiv \sum ({\bf z}^h \odot {\bf z}^r \odot {\bf z}^t)$ where $\odot$ is the element-wise multiplication. The extension is given as:
 
\begin{theorem}
   Assuming DistMult model, an empirical realization of the covariance measure can be given as,
  \begin{align} 
      \label{eq:opt-KG}
      \mathbb{E}[||(\theta_T - \theta^*)_\mathbf{z^{t}}||^2] 
      = \frac{1}{T} \left(\frac{1}{p_d(\mathbf{z}^t|v^{hr})} - 1 + \frac{1}{k p_n(\mathbf{z}^t|v^{hr})} -     \frac{1}{k} \right).
  \end{align}
   where $\mathbf{v}^{hr} = \mathbf{z}^h \odot \mathbf{z}^r$ and $\mathbf{z}^h, \mathbf{z}^r, \mathbf{z}^t$ are the head, relation, and tail embedding vectors, respectively. 
   \label{th:EMU1}
\end{theorem}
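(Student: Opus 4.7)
The plan is to reduce the DistMult link prediction objective to the dot-product form analysed by Y20 and then invoke their covariance computation almost verbatim.

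First, I would observe that the DistMult score admits the inner-product factorisation
\begin{equation*}
s_{\rm DistMult} = \sum_i z^h_i\, z^r_i\, z^t_i = (\mathbf{z}^h \odot \mathbf{z}^r) \cdot \mathbf{z}^t = \mathbf{v}^{hr} \cdot \mathbf{z}^t.
\end{equation*}
Thus, once we condition on a head--relation pair $(h,r)$, scoring a candidate tail is an inner product between the fixed ``context'' vector $\mathbf{v}^{hr}$ and the tail embedding $\mathbf{z}^t$, which is structurally identical to Y20's setup after the substitution $\mathbf{v} \mapsto \mathbf{v}^{hr}$, $\mathbf{u} \mapsto \mathbf{z}^t$. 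I would then rewrite the per-target objective (\ref{eq:loss-optimal}) with positive and negative samples drawn from $p_d(\mathbf{z}^t \mid \mathbf{v}^{hr})$ and $p_n(\mathbf{z}^t \mid \mathbf{v}^{hr})$ respectively, and parametrise the problem by $\theta = [\mathbf{z}^{t_0} \cdot \mathbf{v}^{hr}, \ldots, \mathbf{z}^{t_{N-1}} \cdot \mathbf{v}^{hr}]$ as $\mathbf{z}^{t_i}$ ranges over the $N$ candidate tail embeddings in the graph. This is exactly Y20's parametrisation, with the role of the target node played by the head--relation compound $\mathbf{v}^{hr}$ and the role of the $N$ graph nodes played by the $N$ candidate tails.

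Because the loss function, the parameter vector, and the sampling distributions are now in one-to-one correspondence with those considered by Y20, their covariance-measure derivation leading to (\ref{eq:opt}) carries over unchanged, and the change of variables yields (\ref{eq:opt-KG}) immediately. The main obstacle will be the bookkeeping: checking that $\mathbf{v}^{hr}$, although a derived quantity rather than a free parameter, can legitimately serve as the fixed context in Y20's per-target analysis, and that the conditional distributions $p_d(\mathbf{z}^t \mid \mathbf{v}^{hr})$ and $p_n(\mathbf{z}^t \mid \mathbf{v}^{hr})$ are well-posed pullbacks of the corresponding conditionals on $(h,r)$. Once this is verified, no new concentration or variance estimates are needed; the entire proof reduces to identifying the correct isomorphism onto Y20's framework and transferring their result.
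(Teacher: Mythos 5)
Your proposal is correct and follows essentially the same route as the paper, whose proof is precisely the substitution $\mathbf{u} \mapsto \mathbf{z}^t$, $\mathbf{v} \mapsto \mathbf{z}^h \odot \mathbf{z}^r$ into Y20's derivation; you simply spell out the inner-product factorisation and the parametrisation more explicitly than the paper does. The bookkeeping caveats you raise (well-posedness of the conditionals given $\mathbf{v}^{hr}$) are reasonable diligence but are not addressed in the paper's proof either.
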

\begin{proof}
    \autoref{eq:opt-KG} can be derived by substituting ${\bf z}^t$ and ${\bf z}^h \odot {\bf z}^r$ for ${\bf u, v}$ in the proof provided in Y20. 
\end{proof}

\subsection{EMU: Embedding MUtation}
\label{featuremutation}

\begin{figure}[t]
  \centering
  \includegraphics[width=0.5\textwidth,trim=0.90cm 0.6cm 0.75cm 0.75cm, clip]{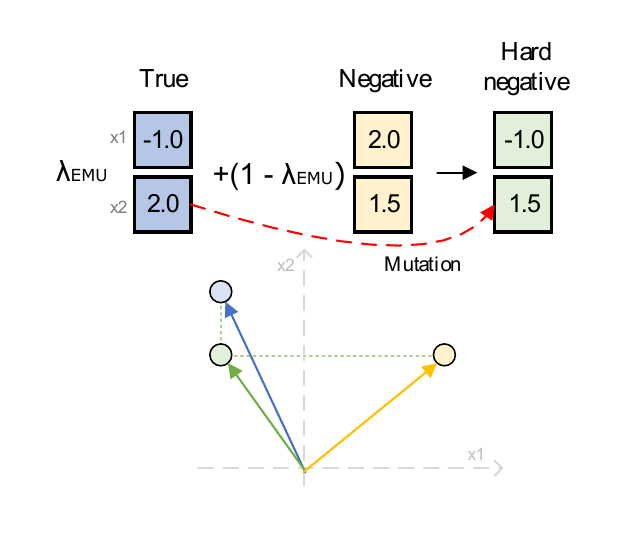}  
  \caption{\textsc{EMU} generates a new negative samples with embedding mutation. The figure illustrates a typical example that generate hard negative tails.}
 \label{fig:mutup_intro2} 
\end{figure}

\textsc{EMU} is inspired from the gene `mutation' technique utilized in evolutionary algorithms. In this study, we propose a new non-linear mixing approach that replaces a certain amount of the embedding vector components in the negative sample with the corresponding parts of the true positive vector components. This technique is a simple yet effective means of enhancing the difficulty of negative samples by increasing their similarity to the true positive. \autoref{fig:mutup_intro2} provides a two-dimensional visualization of this phenomena.

The formal definition of the \textsc{EMU} technique is presented as follows:
\begin{equation}
  \tilde{\mathbf{z}}_{\rm EMU} = \lambda_{\rm EMU} \odot  \mathbf{z}^+ + (1 - \lambda_{\rm EMU}) \odot \mathbf{z}^-, 
  \label{eq:mutup}
\end{equation}
where $\lambda_{\rm EMU} \in \mathbb{R}^d$ is the \textsc{EMU} mixing vector that controls the number of embedding vector components to be mutations, which is denoted as $n_{\rm P}$. 
$\lambda_{\rm EMU}$ is a binary-valued vector whose components are generated through a random sampling process that selects either zero or unity, with the probability of $(1 - n_{\rm P}/d, n_{\rm P}/d)$. 
\footnote{More concretely, the component of the vector $\lambda_{\rm EMU} \in \mathbb{R}^d$ is composed by $n_{\rm P}$ unities and $d - n_{\rm P}$ zeros whose order is randomly determined, e.g., $\{ 0, 1, 1, 0, 0, 0, \cdots \}$. For simplicity we use the random sampling. The study of the better mutation vector $\lambda_{\rm \textsc{EMU}}$ is our future work.} The symbol $\odot$ denotes element-wise multiplication, and $\mathbf{z}^+$ and $\mathbf{z}^-$ correspond to the positive and negative vectors to be mutated, respectively.

For the application of knowledge-base link prediction, we utilize \autoref{eq:mutup} to create the \textsc{EMU} negative tail sample by substituting $\mathbf{z}^t = (\mathbf{z}^{t,+}, \{ \mathbf{z}^t_{0},\mathbf{z}^t_{1},\cdots \}^- )$. We employ the generated samples as the \textsc{EMU} negative samples. 

\subsection{Theoretical Consideration on \textsc{EMU}}
In the following, we demonstrate that \textsc{EMU} generates negative samples that are isotropically distributed around positive samples, thereby satisfying the condition necessary for achieving optimal embedding. Specifically, we present the following theorems to illustrate that the negative samples produced by \textsc{EMU} exhibit isotropic distribution around the positive samples.

\begin{theorem}
Suppose that $\mathbf{A}$ and $\mathbf{B}$ are vectors in $\mathbb{R}^N$ and $\mathbf{B}_{\rm EMU} \equiv \lambda_{\rm EMU} \odot \mathbf{B}$. Assuming a sufficiently small standard deviation in the vector component of $\mathbf{B}$ with non-zero mean value and sufficiently large $N$, the angle formed between $\mathbf{A}$ and $\mathbf{B}_{\rm EMU}$ adheres to a Gaussian distribution whose center is the angle between $\mathbf{A}$ and $\mathbf{B}$.
\label{co:1}
\end{theorem}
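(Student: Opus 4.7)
The plan is to express $\cos\theta_{A,\mathrm{EMU}}$ as a ratio of sums of independent random variables indexed by the Bernoulli-like mask $\lambda_{\mathrm{EMU}}$, concentrate the denominator using the small-standard-deviation assumption on $\mathbf{B}$, apply the central limit theorem to the numerator, and finally transfer the Gaussian approximation from the cosine to the angle itself via the delta method.

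Starting from
\begin{equation*}
\cos\theta_{A,\mathrm{EMU}} \;=\; \frac{\sum_{i=1}^{N} A_i B_i\,\lambda_i}{\|\mathbf{A}\|\,\sqrt{\sum_{i=1}^{N} B_i^2\,\lambda_i}},
\end{equation*}
I would decompose $B_i = \mu + \varepsilon_i$ with $|\varepsilon_i|\ll|\mu|$, invoking the hypothesis that the entries of $\mathbf{B}$ have small standard deviation around a non-zero mean $\mu$. Then $\sum B_i^2 \lambda_i = \mu^2 \sum \lambda_i + O(\varepsilon)$, and a Hoeffding/Chernoff bound shows that $\sum\lambda_i$ concentrates tightly on $pN = n_{\mathrm{P}}$ for large $N$, so $\|\mathbf{B}_{\mathrm{EMU}}\|$ behaves as a deterministic constant to leading order.

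For the numerator, $\sum A_i B_i \lambda_i$ is a sum of $N$ independent, bounded random variables, so the Lindeberg--Feller CLT gives an asymptotically Gaussian distribution with mean $p(\mathbf{A}\cdot\mathbf{B})$ and variance $p(1-p)\sum A_i^2 B_i^2$. Slutsky's theorem then yields that $\cos\theta_{A,\mathrm{EMU}}$ is itself asymptotically Gaussian, and applying the delta method with the smooth map $\arccos(\cdot)$ (valid away from $\pm 1$) produces the claimed Gaussian distribution for the angle.

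The hard part is identifying the center of this Gaussian with the angle $\theta_{AB}$ \emph{exactly}. A direct calculation produces a limiting cosine of $\sqrt{p}\,\cos\theta_{AB}$, because of the $\sqrt{p}$-shrinkage of $\|\mathbf{B}_{\mathrm{EMU}}\|$ relative to $\|\mathbf{B}\|$. To reconcile this with the stated theorem, the small-standard-deviation assumption must be used to identify the effective direction of $\mathbf{B}_{\mathrm{EMU}}$ with that of $\mathbf{B}$, most cleanly by comparing normalized vectors on the $n_{\mathrm{P}}$-dimensional subspace where $\lambda_i=1$ rather than the ambient $\mathbb{R}^N$, or by rescaling the reference so that the expected norm of $\mathbf{B}_{\mathrm{EMU}}$ matches that of $\mathbf{B}$. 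Carrying out this identification — and controlling the $\varepsilon_i$ corrections so they do not shift the Gaussian mean at leading order — is where the bulk of the technical care lies.
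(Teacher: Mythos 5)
Your proposal follows essentially the same route as the paper's own proof (\autoref{sec:a2}): write the cosine as a ratio, decompose the components of $\mathbf{B}$ as a nonzero mean plus a small fluctuation, expand the norm in the denominator, and invoke the central limit theorem over the random mask to obtain a Gaussian. The only structural difference is where the CLT lands: the paper first rotates coordinates so that $\mathbf{A}=a_1\mathbf{e}_1$, which collapses the numerator to the single Bernoulli term $a_1\lambda_1 b_1$ and pushes all of the Gaussian randomness into the norm-correction sum $\zeta=\sum_i\lambda_i\epsilon_i$, whereas you keep general coordinates, apply the CLT to the numerator $\sum_i A_iB_i\lambda_i$, and concentrate the denominator. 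Your version is arguably the more careful one, since the paper's ``without loss of generality'' rotation is not innocuous here: the mask acts componentwise and the small-fluctuation hypothesis on $\mathbf{B}$ is stated componentwise, and neither is rotation-invariant. You also supply the $\arccos$/delta-method step to pass from the cosine to the angle, which the paper omits entirely.

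The difficulty you flag at the end --- that the limiting cosine comes out as $\sqrt{p}\,\cos\theta_{AB}$ rather than $\cos\theta_{AB}$ --- is genuine, and you should know that the paper does not resolve it either: its final line concludes that $\cos\theta_{\rm EMU}$ is Gaussian centered at $\sqrt{N/n_r}\,\cos\theta_0$ (it obtains $1/\sqrt{p}$ rather than your $\sqrt{p}$ because it implicitly conditions on $\lambda_1\neq 0$, i.e.\ on the numerator surviving the mask, while you average over it), which is not the center claimed in the theorem statement. So the identification of the Gaussian's center with $\theta_{AB}$ exactly --- via a renormalization of $\|\mathbf{B}_{\rm EMU}\|$ or by comparing directions within the $n_{\rm P}$-dimensional retained subspace, as you suggest --- is precisely the step missing from the published argument as well. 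As written, both your sketch and the paper establish a Gaussian law for the angle, but centered at a rescaled cosine rather than at $\theta_{AB}$ itself; completing that last identification is the one substantive gap, and it is shared with the source.
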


The proof for this theorem can be found in \autoref{sec:a2}. Utilizing the above theorem, we can deduce:
\begin{theorem}
  \textsc{EMU} generates negative samples isotropically around the target positive sample. 
  \label{th:iso}
\end{theorem}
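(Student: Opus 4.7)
The plan is to reduce Theorem~\ref{th:iso} to a direct application of Theorem~\ref{co:1} by passing through the displacement vector $\mathbf{d} \equiv \tilde{\mathbf{z}}_{\rm EMU} - \mathbf{z}^{t,+}$. First I would manipulate \autoref{eq:mutup}: writing it out with the tail split gives the algebraic identity
\begin{equation*}
\mathbf{d} \;=\; \lambda_{\rm EMU} \odot \mathbf{z}^{t,+} + (1 - \lambda_{\rm EMU}) \odot \mathbf{z}^{t,-} - \mathbf{z}^{t,+} \;=\; (1 - \lambda_{\rm EMU}) \odot (\mathbf{z}^{t,-} - \mathbf{z}^{t,+}).
\end{equation*}
The right-hand side is exactly the ``binary mask $\odot$ vector'' form controlled by Theorem~\ref{co:1}, since $1 - \lambda_{\rm EMU}$ is itself a Bernoulli vector (its component-wise probabilities of zero and one are merely swapped from those of $\lambda_{\rm EMU}$).

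Next I would fix an arbitrary reference unit vector $\mathbf{A} \in \mathbb{R}^d$ and apply Theorem~\ref{co:1} with this $\mathbf{A}$ and $\mathbf{B} = \mathbf{z}^{t,-} - \mathbf{z}^{t,+}$. The conclusion is that the angle between $\mathbf{A}$ and $\mathbf{d}$ is approximately Gaussian, centred at the angle between $\mathbf{A}$ and $\mathbf{z}^{t,-} - \mathbf{z}^{t,+}$, with a spread determined by the mutation probability $n_{\rm P}/d$ and the large-dimension limit. Because this statement holds for \emph{every} reference direction $\mathbf{A}$, the mutation step itself contributes no preferred axis to the direction of $\mathbf{d}$; any residual anisotropy must come entirely from the raw offset $\mathbf{z}^{t,-} - \mathbf{z}^{t,+}$.

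To close the argument and get isotropy \emph{around the positive sample}, I would combine two layers of randomness. The Gaussian angular spread from Theorem~\ref{co:1} disperses $\mathbf{d}$ over a band of directions relative to $\mathbf{z}^{t,-} - \mathbf{z}^{t,+}$ for each individual drawn negative. Then, averaging over $\mathbf{z}^{t,-}$ from the negative sampling distribution, which itself scatters across many directions in the embedding space, smooths the angular distribution of $\mathbf{d}$ into one that is (approximately) uniform on the sphere centred at $\mathbf{z}^{t,+}$, which is what ``isotropic around the target positive sample'' is taken to mean.

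The main obstacle is making ``isotropic'' quantitative. Theorem~\ref{co:1} alone yields only a Gaussian concentrated around a single mean direction per fixed negative, which is far from uniform on the sphere, so a strong isotropy claim hinges on the second averaging step and implicitly on the assumption that the negative sampling distribution is not concentrated along a single axis. I expect the intended reading to be the softer one --- namely, that the mutation introduces no privileged direction on top of whatever direction $\mathbf{z}^{t,-} - \mathbf{z}^{t,+}$ already has --- in which case the displacement identity plus the reference-direction-free conclusion of Theorem~\ref{co:1} essentially completes the proof.
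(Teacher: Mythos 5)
Your proposal follows essentially the same route as the paper: the identity $\tilde{\mathbf{z}}_{\rm EMU} - \mathbf{z}^{+} = (\mathbf{1} - \lambda_{\rm EMU}) \odot (\mathbf{z}^{-} - \mathbf{z}^{+})$ followed by an appeal to Theorem~\ref{co:1} to get a Gaussian angular distribution (the paper measures angles between pairs of EMU deviation vectors $\Delta \mathbf{z}^{\lambda}_i$ and $\Delta \mathbf{z}^{\lambda}_0$ rather than against an arbitrary reference $\mathbf{A}$, but the mechanism is identical). The gap you flag --- that a Gaussian concentrated about a single mean direction is not by itself uniform on the sphere --- is also present in the paper's proof, which simply asserts the passage from the Gaussian angle distribution to isotropy without the second averaging step you propose.
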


\begin{proof}
   According to the definition (\ref{eq:mutup}), the deviation vector $\Delta z^{\lambda}$ can be written as:
   \begin{align}
     \Delta z^{\lambda} \equiv \tilde{\mathbf{z}}_{\rm EMU} - \mathbf{z}^+ = (\mathbf{1} - \lambda_{\rm EMU}) \odot \Delta \mathbf{z}, 
   \end{align}
   where $\Delta \mathbf{z} \equiv \mathbf{z}^- - \mathbf{z}^+$. 
   Subsequently, we consider the angular distribution of $\{\Delta z^{\lambda}_i \}_{i=1, \dots k}$. Without loss of generality, we take the angle distribution of $\Delta \mathbf{z}^{\lambda}$ in terms of $\Delta z^{\lambda}_0$. The cosine function between $\Delta \mathbf{z}^{\lambda}_i$ and $\Delta \mathbf{z}^{\lambda}_0$ can be expressed as: 
   \begin{align}
       \{ \cos \theta^{\lambda}_i \} &\equiv \left\{ \frac{\Delta \mathbf{z}^{\lambda}_0 \cdot \Delta \mathbf{z}^{\lambda}_i}{|\Delta \mathbf{z}^{\lambda}_0| |\Delta \mathbf{z}^{\lambda}_i|}  \right\}
       \nonumber \\
       &= \left\{ \frac{\Delta \mathbf{z}^{\lambda}_0}{|\Delta \mathbf{z}^{\lambda}_0| |\Delta \mathbf{z}^{\lambda}_i|} \cdot (\mathbf{1} - \lambda_{\rm EMU,i}) \odot \Delta \mathbf{z}_i \right\}
       .
   \end{align}
   In this context, the operator $(\mathbf{1} - \lambda_{{\rm EMU},i}) \odot$ projects $\Delta \mathbf{z}_i$ onto a lower-dimensional hypersurface, and \autoref{co:1} hence shows $(\mathbf{1} - \lambda_{{\rm EMU},i}) \odot$ induces the Gaussian distribution in terms of the angle between $\Delta \mathbf{z}^{\lambda}_0$ and $\Delta \mathbf{z}^{\lambda}_i$ in the angle-space. This results in isotropic distribution of the \textsc{EMU} samples in the coordinate space, which are isotropically dispersed around the target positive tail sample. 
\end{proof}

Before proving the final theorem, we provide the following lemma. In the rest of this subsection, we will consider $\mathbf{\tilde{z}}_{\rm EMU}, \mathbf{z}^{t,+}, \Delta \mathbf{z}^{\lambda}$ as independent random variables.
\begin{lemma}
  In the context of \textsc{EMU}, $p_{n, {\rm EMU}}(\mathbf{z}^t|\mathbf{v}^{hr})$ can be expressed as: 
  \begin{align}
    p_{n, {\rm EMU}}(\mathbf{\tilde{z}}_{\rm EMU} |v^{h r}) 
    =A \int d\mathbf{z}^{t,+} \int  d \Delta \mathbf{z}^{\lambda} \left[\delta_D(\mathbf{\tilde{z}}_{\rm EMU} - \mathbf{z}^{t,+} - \Delta \mathbf{z}^{\lambda}) 
     f_{\rm iso}(|\Delta \mathbf{z}^{\lambda}| \ |\mathbf{z}^{t,+}, \Delta \bar{z}) p_d(\mathbf{z}^{t,+}|\mathbf{v}^{hr}) \right ]
    \label{eq:emu-neg-distribution}
  \end{align}
  where $\delta_D$ is the Dirac delta function, $f_{\rm iso}$ is an isotropic distribution with a typical decaying scalar scale $\Delta \bar{z}$, and $A$ is an appropriate constant.  
  \label{lm:emu-neg}
\end{lemma}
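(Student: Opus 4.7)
The plan is to reconstruct the density of $\mathbf{\tilde z}_{\rm EMU}$ by viewing it as the push-forward of the joint distribution of its two constitutive random ingredients: the target positive embedding $\mathbf{z}^{t,+}$ drawn from $p_d(\cdot\mid\mathbf{v}^{hr})$ and the additive displacement $\Delta \mathbf{z}^{\lambda} \equiv \mathbf{\tilde z}_{\rm EMU}-\mathbf{z}^{t,+}$ arising from the mutation mechanism. Because the lemma asks us to treat these two as independent random variables, the joint law on $(\mathbf{z}^{t,+},\Delta\mathbf{z}^{\lambda})$ factorises, and the induced density on $\mathbf{\tilde z}_{\rm EMU}$ is obtained by standard change-of-variables / convolution, which is precisely what the Dirac delta in \autoref{eq:emu-neg-distribution} encodes.

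The steps I would take are as follows. First, recall from \autoref{eq:mutup} that $\mathbf{\tilde z}_{\rm EMU}=\mathbf{z}^{t,+}+(\mathbf{1}-\lambda_{\rm EMU})\odot(\mathbf{z}^{t,-}-\mathbf{z}^{t,+})=\mathbf{z}^{t,+}+\Delta\mathbf{z}^{\lambda}$, so the EMU sample is literally an additive perturbation of the positive embedding. Second, write the density of $\mathbf{\tilde z}_{\rm EMU}$ conditioned on $\mathbf{v}^{hr}$ as the marginal of the joint density,
\begin{equation}
p_{n,{\rm EMU}}(\mathbf{\tilde z}_{\rm EMU}\mid\mathbf{v}^{hr})=\int d\mathbf{z}^{t,+}\int d\Delta\mathbf{z}^{\lambda}\,\delta_D\!\bigl(\mathbf{\tilde z}_{\rm EMU}-\mathbf{z}^{t,+}-\Delta\mathbf{z}^{\lambda}\bigr)\,p(\mathbf{z}^{t,+},\Delta\mathbf{z}^{\lambda}\mid\mathbf{v}^{hr}),
\end{equation}
which is just the identity $p_{X+Y}(w)=\int\int\delta_D(w-x-y)p_{X,Y}(x,y)\,dx\,dy$. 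Third, invoke the independence assumption stated just before the lemma to factorise $p(\mathbf{z}^{t,+},\Delta\mathbf{z}^{\lambda}\mid\mathbf{v}^{hr})=p_d(\mathbf{z}^{t,+}\mid\mathbf{v}^{hr})\,p(\Delta\mathbf{z}^{\lambda}\mid\mathbf{z}^{t,+})$. Fourth, apply \autoref{th:iso}: the mutation mechanism disperses $\Delta\mathbf{z}^{\lambda}$ isotropically about $\mathbf{z}^{t,+}$, so its conditional density depends only on the radius $|\Delta\mathbf{z}^{\lambda}|$ (together with a scale $\Delta\bar z$ set by the typical spread of $\mathbf{z}^{t,-}-\mathbf{z}^{t,+}$ after the projection by $\mathbf{1}-\lambda_{\rm EMU}$); call this density $f_{\rm iso}(|\Delta\mathbf{z}^{\lambda}|\mid\mathbf{z}^{t,+},\Delta\bar z)$. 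Substituting gives \autoref{eq:emu-neg-distribution}, with $A$ absorbing the constant that converts $f_{\rm iso}$ from a radial profile into a properly normalised density on $\mathbb{R}^d$.

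The step I expect to be the main obstacle is the fourth one, namely the rigorous justification that the conditional law of $\Delta\mathbf{z}^{\lambda}$ is a purely radial function. \autoref{co:1} and \autoref{th:iso} supply isotropy only in the angular marginal, and in a regime where the components of the negative-minus-positive vector have small variance around a non-zero mean and $d$ is large; to promote this into a full density of the form $f_{\rm iso}(|\Delta\mathbf{z}^{\lambda}|\mid\mathbf{z}^{t,+},\Delta\bar z)$ one must also argue that the radial part of $\Delta\mathbf{z}^{\lambda}$ concentrates on a single scale $\Delta\bar z$ (so that its dependence on direction can be ignored) and that the residual dependence on $\mathbf{z}^{t,+}$ enters only through that scale. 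I would handle this by a concentration-of-measure argument: under the small-variance, large-$d$ assumption used already in \autoref{co:1}, $|\Delta\mathbf{z}^{\lambda}|^2=\sum_i(1-\lambda_{{\rm EMU},i})(\Delta z_i)^2$ is a sum of $d-n_{\rm P}$ near-deterministic positive terms and therefore concentrates around a mean value $\Delta\bar z^{2}$, making the radial marginal sharply peaked; combining this with the angular isotropy from \autoref{th:iso} yields the claimed form. The remaining bookkeeping — identifying $A$ with the reciprocal of the appropriate Jacobian/surface factor so that the right-hand side of \autoref{eq:emu-neg-distribution} integrates to one — is routine.
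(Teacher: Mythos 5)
Your proposal is correct and follows essentially the same route as the paper: the paper likewise writes $p_{n,\rm EMU}$ as an integral of $f_{\rm iso}$ against $p_d(\mathbf{z}^{t,+}\mid\mathbf{v}^{hr})$ (passing first through an empirical sum over data points before idealizing to the integral) and then inserts the Dirac delta to enforce $\mathbf{\tilde z}_{\rm EMU}=\mathbf{z}^{t,+}+\Delta\mathbf{z}^{\lambda}$, invoking \autoref{th:iso} for isotropy and Proposition \ref{prop:2} for the scale $\Delta\bar z$. Your convolution-identity formulation is if anything cleaner, and the obstacle you flag --- that \autoref{th:iso} only delivers angular isotropy, so the purely radial form of $f_{\rm iso}$ needs a separate concentration argument --- is a real gap that the paper's proof simply asserts away.
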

The proof for this lemma can be found in \autoref{sec:lem-proof}.
To finalize, we can conclude the following claim:

\begin{theorem}
  Assuming \textsc{EMU} considers the embedding vector of negative samples to be muted in the neighbor of the considering positive sample, the negative sample distribution by \textsc{EMU} results in near-optimal embedding. 
  \label{th:3}
\end{theorem}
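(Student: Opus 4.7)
The plan is to combine \autoref{lm:emu-neg} with \autoref{th:EMU1}: the lemma provides the explicit form of $p_{n,\text{EMU}}$, while the theorem ties the embedding error to the interplay between $p_n$ and $p_d$. Showing that $p_{n,\text{EMU}}$ closely tracks $p_d$ under the stated neighborhood assumption will then yield the near-optimality claim.

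First, I would eliminate the Dirac delta in \autoref{lm:emu-neg} by integrating over $\Delta\mathbf{z}^{\lambda}$, reducing the EMU negative distribution to
\[
p_{n,\text{EMU}}(\tilde{\mathbf{z}}|v^{hr}) = A \int d\mathbf{z}^{t,+}\, f_{\text{iso}}\!\left(|\tilde{\mathbf{z}} - \mathbf{z}^{t,+}| \,\middle|\, \mathbf{z}^{t,+}, \Delta\bar{z}\right) p_d(\mathbf{z}^{t,+}|v^{hr}),
\]
which expresses $p_{n,\text{EMU}}$ as a convolution of the positive distribution with an isotropic kernel of characteristic width $\Delta\bar{z}$. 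Next, the theorem's hypothesis --- that EMU mutates only within a neighborhood of the positive sample --- forces $\Delta\bar{z}$ to be small relative to the separation between positive modes, so $f_{\text{iso}}$ behaves as a narrow bump. A Taylor expansion of $p_d$ around $\tilde{\mathbf{z}}$ then gives $p_{n,\text{EMU}}(\tilde{\mathbf{z}}|v^{hr}) = p_d(\tilde{\mathbf{z}}|v^{hr}) + \mathcal{O}(\Delta\bar{z}^{\,2}\nabla^2 p_d)$, since by isotropy the odd moments of the kernel vanish. In particular $p_{n,\text{EMU}}$ is positively correlated with $p_d$, which is exactly the sufficient condition from the principle of negative sampling imported from Y20 at the beginning of this section.

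Substituting this relation into \autoref{eq:opt-KG} of \autoref{th:EMU1}, the factor $1/(k\, p_n(\mathbf{z}^t|v^{hr}))$ becomes approximately $1/(k\, p_d(\mathbf{z}^t|v^{hr}))$, so the right-hand side is driven toward the minimal value achievable for fixed $T$ and $k$. This is precisely the statement that EMU yields near-optimal embedding in the sense of \autoref{th:EMU1}, which closes the argument.

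The main obstacle will be controlling $1/p_{n,\text{EMU}}$ at points where $p_d$ is exactly or nearly zero: convolution with $f_{\text{iso}}$ smears positive-sample mass outward, so $p_{n,\text{EMU}}>0$ slightly outside the support of $p_d$ and can be marginally smaller than $p_d$ at sharp modes. Making the Taylor step fully rigorous therefore requires bounding the tail of $f_{\text{iso}}$ so that EMU does not deposit samples far from any positive, together with restricting the covariance evaluation to points EMU can actually produce, so that the near-optimal bound holds where the embedding error is actually incurred rather than uniformly over $\mathbb{R}^d$.
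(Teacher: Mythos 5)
Your proposal follows essentially the same route as the paper's proof: invoke Lemma \ref{lm:emu-neg}, collapse the Dirac delta to write $p_{n,\mathrm{EMU}}$ as a convolution of $p_d$ with the narrow isotropic kernel, Taylor-expand to conclude $p_{n,\mathrm{EMU}} \simeq A' p_d + \mathcal{O}(\Delta\bar{z}^2)$, and substitute into \autoref{eq:opt-KG} so that the deviation depends only on $p_d$ as in Y20. Your explicit remark that the odd moments vanish by isotropy, and your closing caveat about controlling $1/p_{n,\mathrm{EMU}}$ where $p_d$ is small, are slightly more careful than the paper's treatment but do not change the argument.
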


\begin{proof} 
  According to Lemma \ref{lm:emu-neg}, $f_{\rm iso}$ is a sufficiently rapidly decreasing function when $|\Delta \mathbf{z}^{\lambda}| > \Delta \bar{z}$. In the following, we approximate $f_{\rm iso}$ by the Heaviside function: $H(\Delta \bar{z}- |\Delta \mathbf{z}^{\lambda}|)$, which does not affect the following order-of-magnitude discussion. Concretely, the variation within $|\Delta \mathbf{z}^{\lambda}| < \Delta \bar{z}$ can be taken into account by properly renormalize the coefficient denoted as $A$ in the following. Then, \autoref{eq:emu-neg-distribution} reduces to: 
  \begin{align}
    p_{n, {\rm EMU}}(\mathbf{\tilde{z}}_{\rm EMU} |\mathbf{v}^{h r}) 
    &\simeq A \int d \Delta \mathbf{z}^{\lambda} \ H(\Delta \bar{z} - |\Delta \mathbf{z}^{\lambda}|) 
    p_d(\mathbf{\tilde{z}}_{\rm EMU} - \Delta \mathbf{z}^{\lambda}|\mathbf{v}^{hr}) 
    \nonumber \\
    &\simeq A \int_0^{\Delta \bar{z}} d |\Delta \mathbf{z}^{\lambda}| \int_0^{4 \pi} d \Omega_{\Delta z^{\lambda}}  
    p_d(\mathbf{\tilde{z}}_{\rm EMU} - \Delta \mathbf{z}^{\lambda}|\mathbf{v}^{hr}) 
    \nonumber \\
    &\simeq A'\ p_d(\mathbf{\tilde{z}}_{\rm EMU}|\mathbf{v}^{hr}) + \mathcal{O}((\Delta \bar{z}^{\lambda})^2), 
  \end{align}
  where $A'$ is a constant. To obtain the final line, we performed the Taylor expansion in terms of $|\Delta \mathbf{z}^{\lambda}|$ 
  and performed the integration in terms of $\Delta z^{\lambda}$. 
  Then, by substituting the above equation, \autoref{eq:opt-KG} reduces to: 
  \begin{align}
    \mathbb{E}[||(\theta_T - \theta^*)_\mathbf{z^t}||^2] 
    = \frac{1}{T} \left(\frac{1}{p_d(\mathbf{z}^t|\mathbf{v}^{hr})} \left[1 + \frac{1 - \mathcal{O}((\Delta \bar{z}^2)}{k A'}\right] - 1 - \frac{1}{k} \right).
  \end{align}
 As in Y20, this indicates that the order of magnitude of deviation is only negatively related to $p_d$ for the case of \textsc{EMU}.
\end{proof}

\autoref{th:3} suggests that the distribution of negative samples generated by \textsc{EMU} closely aligns with \autoref{eq:opt-KG} as stated in Theorem \ref{th:EMU1}, indicating that EMU produces near-optimal negative sample distribution.
Note that \autoref{th:3} also shows that the near-optimal embedding can be achieved when negative samples are distributed according to a rapidly decreasing isotropic function, $f_{\rm iso}$, around the target positive samples.  \textsc{EMU} facilitates the easy generation of such negative samples. 
Note that we have not explicitly used the fact that the objective function depends on $s_{\rm DistMult}$. The logic in the above proof and Y20 can be applied to a more generic form of the embedding, $f(\mathbf{z^h, z^r, z^t})$ where $f$ is a scalar function that takes $\mathbf{z^h, z^r, z^t}$ as inputs. This formulation encompasses models such as the DistMult and neural networks. 

\subsection{Overall Loss Terms}
Inspired by knowledge distillation \citep{hinton2015distilling}, we combine the \textsc{EMU} loss function with the usual loss function without \textsc{EMU}, enabling the model to learn from the vanilla negative samplings (i.e., sampled using the existing methods) as well. 
The overall loss function is expressed as: 
\begin{align}
   \mathcal{L} = \mathcal{L} (s^+, \{s_{0,{\rm EMU}}, s_{1, {\rm EMU}}, \cdots \}^- ; \bar{y})
    + \alpha  \mathcal{L} (s^+, \{s_0, s_1, \cdots \}^- ; \hat{y}) ,
   \label{eq:overall}
\end{align}
where $\mathcal{L}$ is a contrastive loss function, $\hat{y}$ is the one-hot label, and $\bar{y}$ is a generalized label by such as a label-smoothing. The numerical coefficient $\alpha$ is utilized for weight balancing between the losses. 

\section{Emperical Validation}
In this section, we perform an experimental evaluation of \textsc{EMU} for the link prediction problem to validate the effectiveness of \textsc{EMU} as shown in \autoref{sec:method}. To ensure a thorough evaluation, we chose commonly used KG embedding models (ComplEX \citep{complex,complexN3}, DistMult \citep{distmult,distmult3}, TransE \citep{transe}, RotatE \citep{rotate}), HAKE \citep{HAKE}, and NBFNet \citep{Zhu2021NeuralBN} to test with \textsc{EMU}. 
Although we used the DistMult model in \autoref{sec:method}, the above models enable to assess the effectiveness of \textsc{EMU} to more generic form of operation for triplets than the inner-product type one.
Furthermore, we evaluate them on three widely used knowledge graphs, namely FB15k-237 \citep{fb15_234_db}, WN18RR \citep{conve}, and YAGO3-10 \citep{yago3} whose detailed statistics are provided in \autoref{sec:datasets}. 

\begin{figure*}[t!]
  \centering
  \includegraphics[width = 0.32\textwidth,trim=0cm 0cm 0cm 0cm, clip]{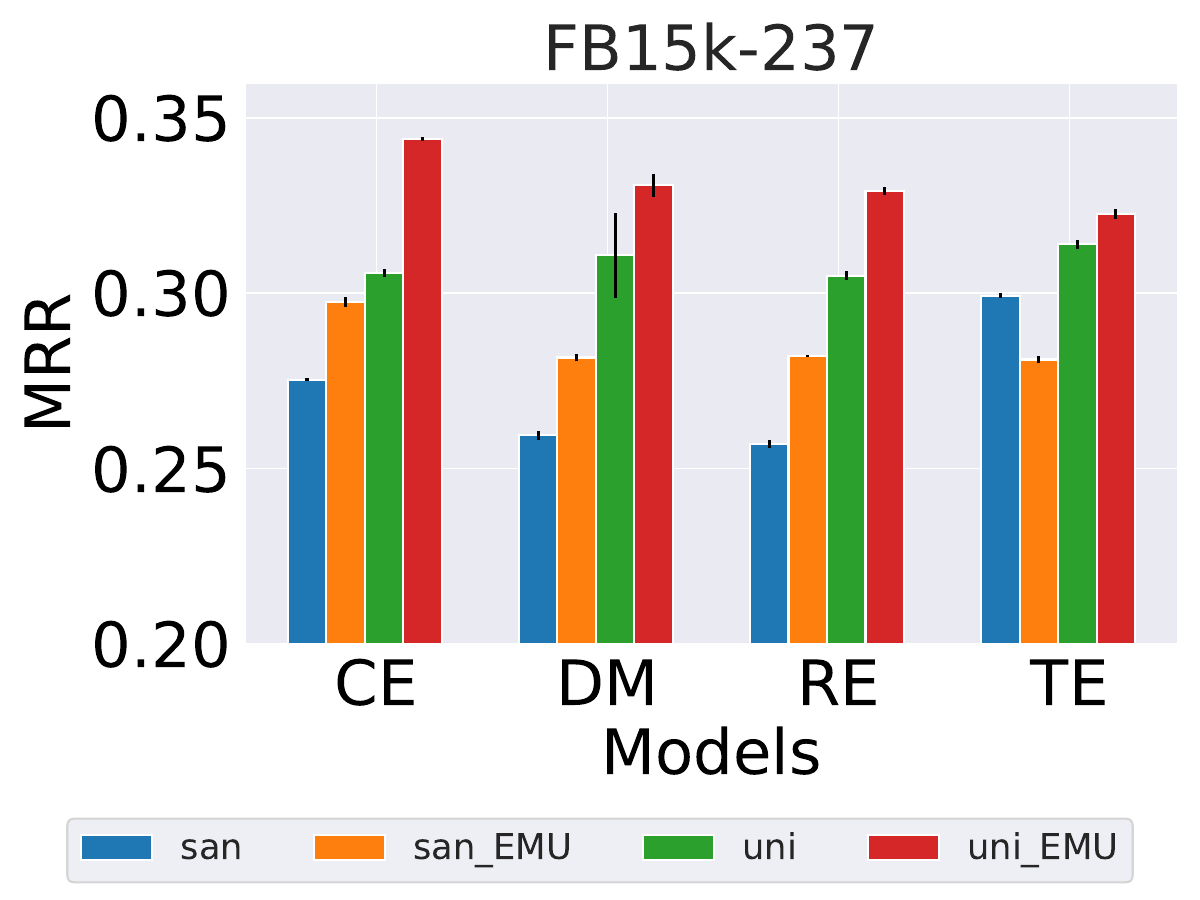}
  \includegraphics[width = 0.32\textwidth,trim=0cm 0cm 0cm 0cm, clip]{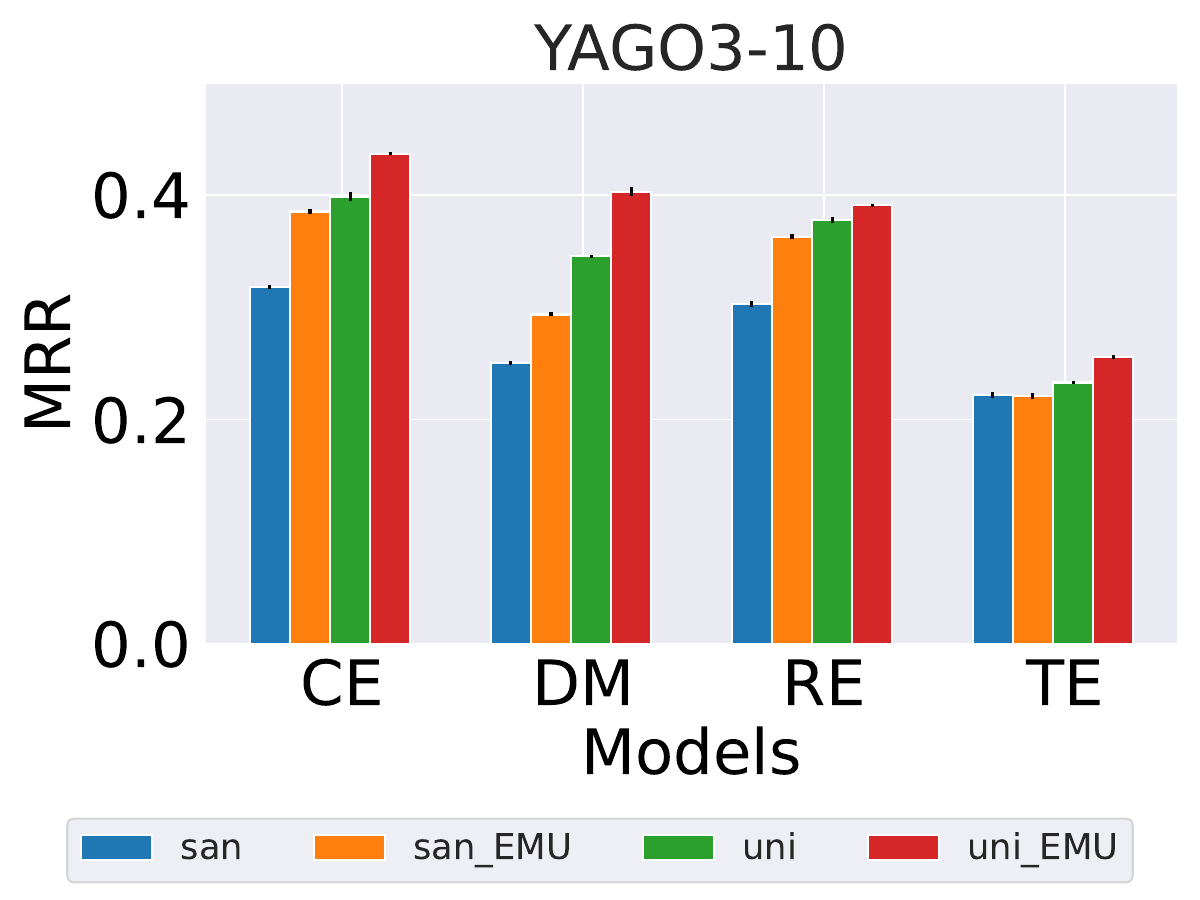}
  \includegraphics[width = 0.32\textwidth,trim=0cm 0cm 0cm 0cm, clip]{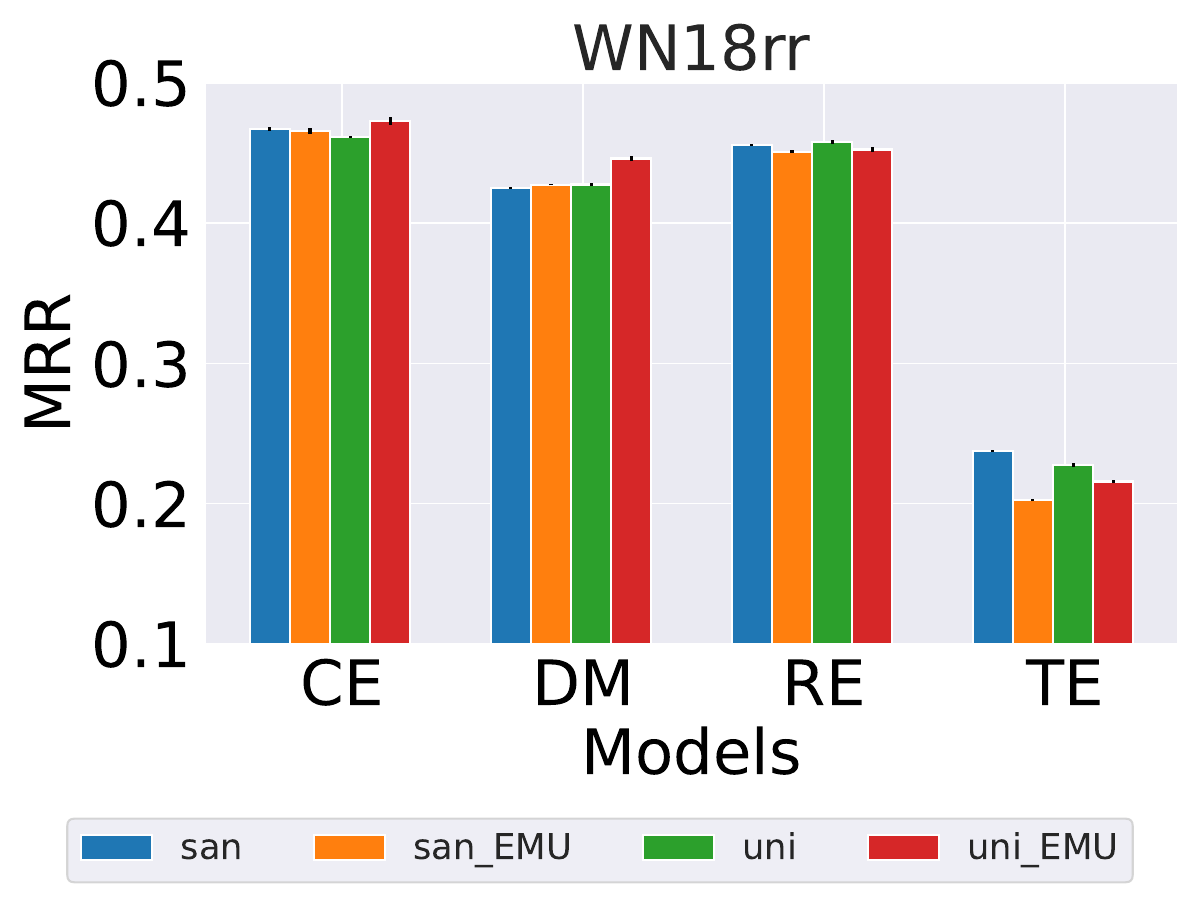}  
  \caption{MRR for the datasets: FB15k-237, YAGO3-10, and WN18RR. The blue, orange, green, and red colored bars mean the result of using the following negative sampling methods: "SAN", "SAN with EMU", "uniform", and "uniform with EMU", respectively.
  }
 \label{fig:Base_Results} 
\end{figure*}

\subsection{Experimental Setup}
In order to enable a fair comparison between the different models and to ensure that all methods are evaluated under the same conditions, we implemented all the methods \footnote{The code to replicate our experiments can be found: \url{https://anonymous.4open.science/r/EMU-KG-6E58}.}. 
Among all existing baselines, we consider vanilla uniform negative sampling \citep{transe}, SAN \citep{ahrabian-etal-2020-structure}, and NScaching \citep{zhang2019nscaching} as the most relevant to compare our work against. A detailed experimental setup is provided in \autoref{sec:setups}. 
For the loss function in \autoref{eq:overall}, we consider the cross-entropy loss function with a slightly modified label-smoothing as $\bar{y}$, which is described in \autoref{sec:ULS}. 
\footnote{Note that the assumed  loss function in \autoref{sec:method}, that is, the sigmoid-type loss function \autoref{eq:loss-optimal}, is related to but different from \autoref{eq:overall} which is one of the recent-popular loss functions \citep{ruffinelli2020you}. The following experimental results also indicates the wider applicability of \textsc{EMU} than assumed in \autoref{sec:method}.}

\subsection{Results}
This section provides a summary and discussion of the obtained results. 

\autoref{fig:Base_Results} illustrates the quantitative results in \autoref{tab:main_results}, displaying three plots for the MRR results obtained for the FB15k-237, YAGO3-10, and WN18rr datasets. The results of HAKE model are provided in \autoref{tab:HAKE_results} and the results with NScaching are provided in \autoref{tab:nscaching_results}. Each plot includes four groups of column bars, representing the results for ComplEX \citep{complex,complexN3} (CE), DistMult \citep{distmult,conve,distmult3} (DM), RotatE \citep{rotate} (RE), TransE \citep{transe} (TE). The columns are distinguished by colors that correspond to the results obtained from running the SAN (in blue), SAN\_\textsc{EMU} (i.e.: SAN negative sampling method with \textsc{EMU}, in orange), uniform sampling (green), and uni\_\textsc{EMU} (i.e.: simple uniform negative sampling with \textsc{EMU}, in red). The findings demonstrate that, in most cases, employing \textsc{EMU} significantly improves the scores across all embedding models. An exception is observed with the WN18rr dataset, which is further analyzed and theoretically examined in \autoref{sec:mut_effect}. 

\subsection{Embedding Dimension Efficacy Study\label{sec:dim-dep}}

\begin{table}[t]
\centering
\resizebox{0.7\columnwidth}{!}{
\begin{tabular}{lllr|ll}
\toprule
Dataset & Model & Case & Parameter: (d, n) &    MRR &        HITS@10  \\
\midrule
FB15k-237 & DistMult  & w/t \textsc{EMU} &(200, 64) & 0.314 &  0.500 \\
&         & w/t \textsc{EMU} &(1000, 64) & \underline{0.327} & {\bf 0.519} \\
&         & \textsc{EMU}     &(200, 64) & {\bf 0.329} & \underline{0.514}  \\
\cmidrule(lr){2-6}
&HAKE  & w/t \textsc{EMU} &(200, 256) & 0.175 &  0.315 \\
&      & w/t \textsc{EMU} &(1000, 256) & \underline{0.308}\textsuperscript{\emph{a}} & \underline{0.493}\textsuperscript{\emph{a}} \\
&      & \textsc{EMU}     &(200, 256) & {\bf 0.311} & {\bf 0.501}  \\
\midrule   
YAGO3-10  & DistMult  & w/t \textsc{EMU} &(100, 256) & 0.345 &  0.538 \\
&         & w/t \textsc{EMU} &(1000, 256) & \underline{0.393} & \underline{0.595} \\
&         & \textsc{EMU}     &(100, 256) & {\bf 0.403} & {\bf 0.601}  \\
\cmidrule(lr){2-6}
&HAKE  & w/t \textsc{EMU} &(500, 256) & 0.452 &  0.651 \\
&      & w/t \textsc{EMU} &(1000, 256) & \underline{0.482} & \underline{0.665} \\
&      & \textsc{EMU}     &(500, 256) & {\bf 0.499}\textsuperscript{\emph{a}} & {\bf 0.687}\textsuperscript{\emph{a}}  \\
\bottomrule
\end{tabular}}
\caption{Embedding dimension efficacy study results on FB15K-237 and YAGO3-10. (d, n) denote the embedding dimension and the negative sample number. "w/t EMU" denotes the model trained without \textsc{EMU} but with uniform sampling. The best performance is written in bold font and the second best performance is written with underline.}
\footnotesize{  \textsuperscript{\emph{a}} The reported values are obtained by our own training of HAKE model with utilizing the official repository \citep{HAKE}.\\
              }
\label{tab:feat-dim-dep}
\end{table}

In this subsection, we study the embedding dimension efficacy of \textsc{EMU} using FB15K-237 and YAGO3-10. 
For this study, we consider DistMult and HAKE models as classical and recent representative KGE models. 
To show \textsc{EMU} efficacy, we performed the training of the models with large embedding dimension without \textsc{EMU} in comparison with the one with small embedding dimension with \textsc{EMU}. The result is provided in \autoref{tab:feat-dim-dep}, which shows that \textsc{EMU} enables to achieve the model performance comparable to five times larger embedding dimension case. In particular, HAKE results show a significant performance gain, indicating that \textsc{EMU} enables the recent sophisticate model with smaller size but keeping the prediction performance. 

\subsection{Mutation effect\label{sec:mut_effect}}

In this subsection, we analyze and discuss the mutation effect in terms of embedding similarity. We use DistMult as a reference model and train it on FB15k-237 and WN18RR datasets. We visualize the embedding vector of the negative tail obtained with \textsc{EMU} to compare it with other negative sampling strategies, i.e., uniform random sampling and \textsc{SAN} negative sampling. 

\autoref{fig:Mutup_analysis_cos-sim} shows the cosine similarity of negative samples provided by the three strategies. The similarity distributions of negative samples produced by the \textit{uniform} and \textit{SAN} methods are quite low, resulting in "easy" negative samples. In contrast, the negative samples generated by \textsc{EMU} exhibits a much larger similarity, indicating that \textsc{EMU} generates harder negative samples than the other methods as shown in Proposition \ref{prop:2}. 

\autoref{fig:Mutup_analysis} depicts the distribution of true-tails and negative tails for two different datasets, namely FB15k-237 and WN18rr, by plotting the first and second PCA components. The left panel of each figure shows the distribution when negative tails are uniformly sampled, while the right panel depicts the distribution using \textsc{EMU} negative-tails. In the \autoref{fig:Mutup_analysis}, the distribution around a true tail is anisotropic for uniform-negative sampling, while \textsc{EMU} negative-tails show a rapidly decreasing isotropic distribution as shown in Theorem \ref{th:iso}, which validates our assumption of $f_{\rm iso}$ by the Heaviside function. 
Moreover, the distributions for FB15k-237 are quite varied in comparison to those of  WN18RR. This could be an explanation for the higher performance gain when using \textsc{EMU} for FB15k-237  (see \autoref{tab:main_results})\footnote{The isotropic distribution of embedded node vectors observed in the WN18RR dataset is consistent with findings from previous studies \citep{sadeghi2021embedding}, highlighting the influence of node degree and centrality measures on node embeddings \citep{sardina2024survey,he2022embedding,shomer2023distance}. Furthermore, we note that the performance improvements of DistMult and ComplEx remain significant, with gains ranging from 5\% to 10\%. This improvement may be attributed to the ability of DistMult-type models to better capture low-degree entities compared to TransE \citep{rossi2020knowledge}.
}.
\begin{figure}[!]
  \centering
  \includegraphics[width=0.3\textwidth]{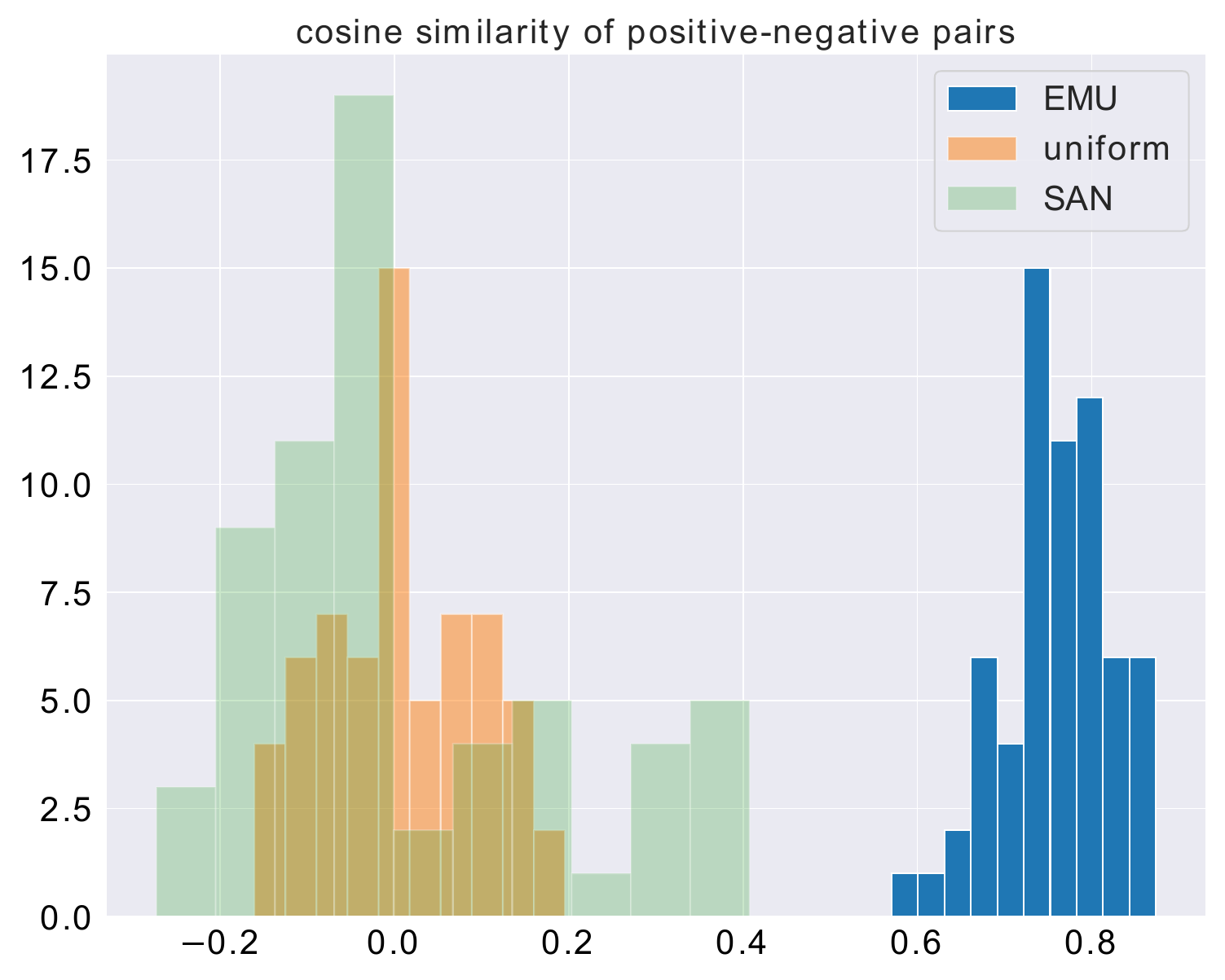}
  \caption{Cosine similarity between positive and negative sample pair for DistMult trained on FB15k-237 dataset. The used negative samples are: uniform, \textsc{EMU}, and SAN. The larger, the more similar. }
    \label{fig:Mutup_analysis_cos-sim}
\end{figure}

\begin{figure}[h]
  \centering
  \includegraphics[width=0.48\textwidth,trim=0cm 0cm 0cm 0cm, clip]{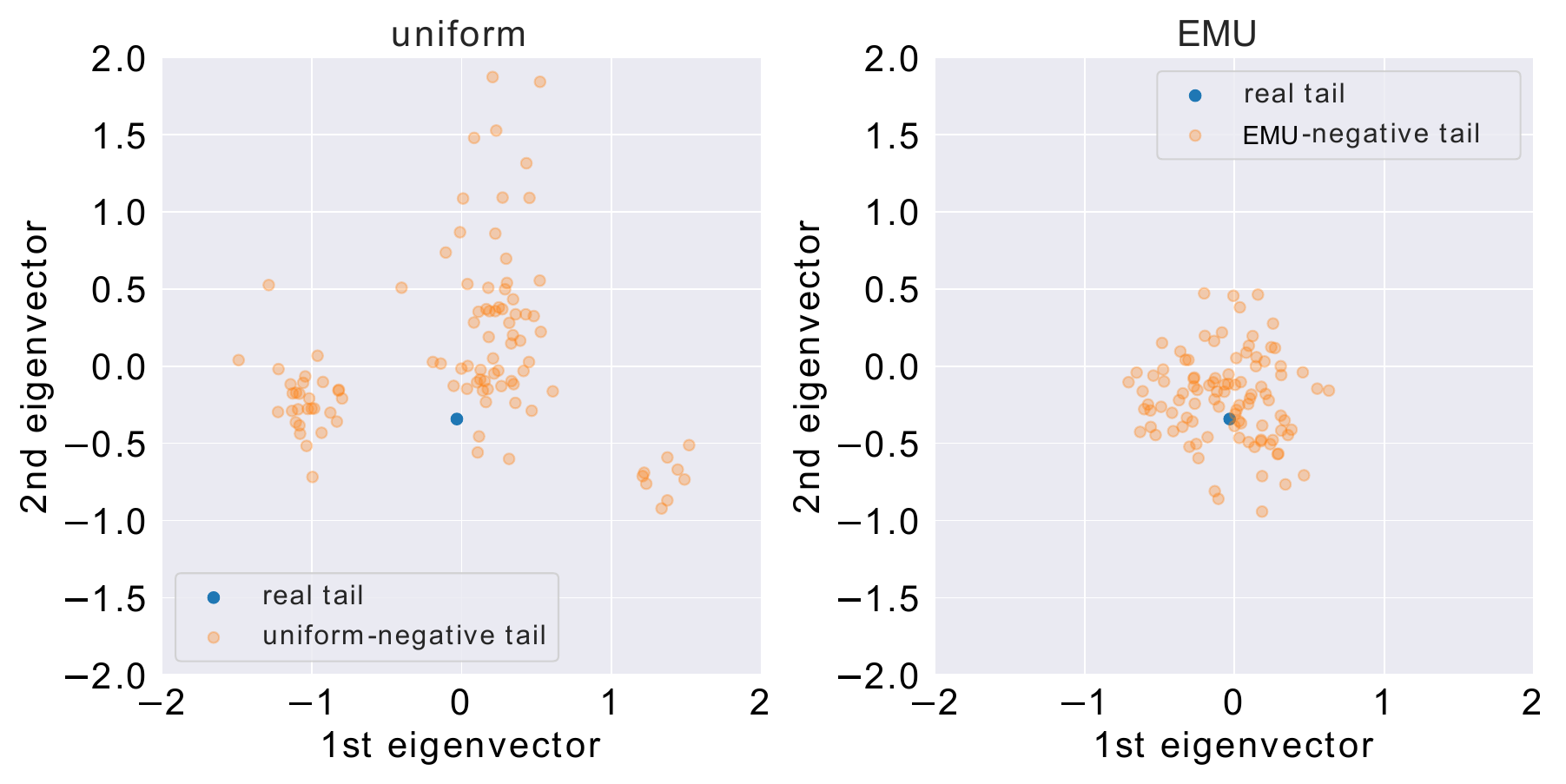} 
  \includegraphics[width=0.48\textwidth,trim=0cm 0cm 0cm 0cm, clip]{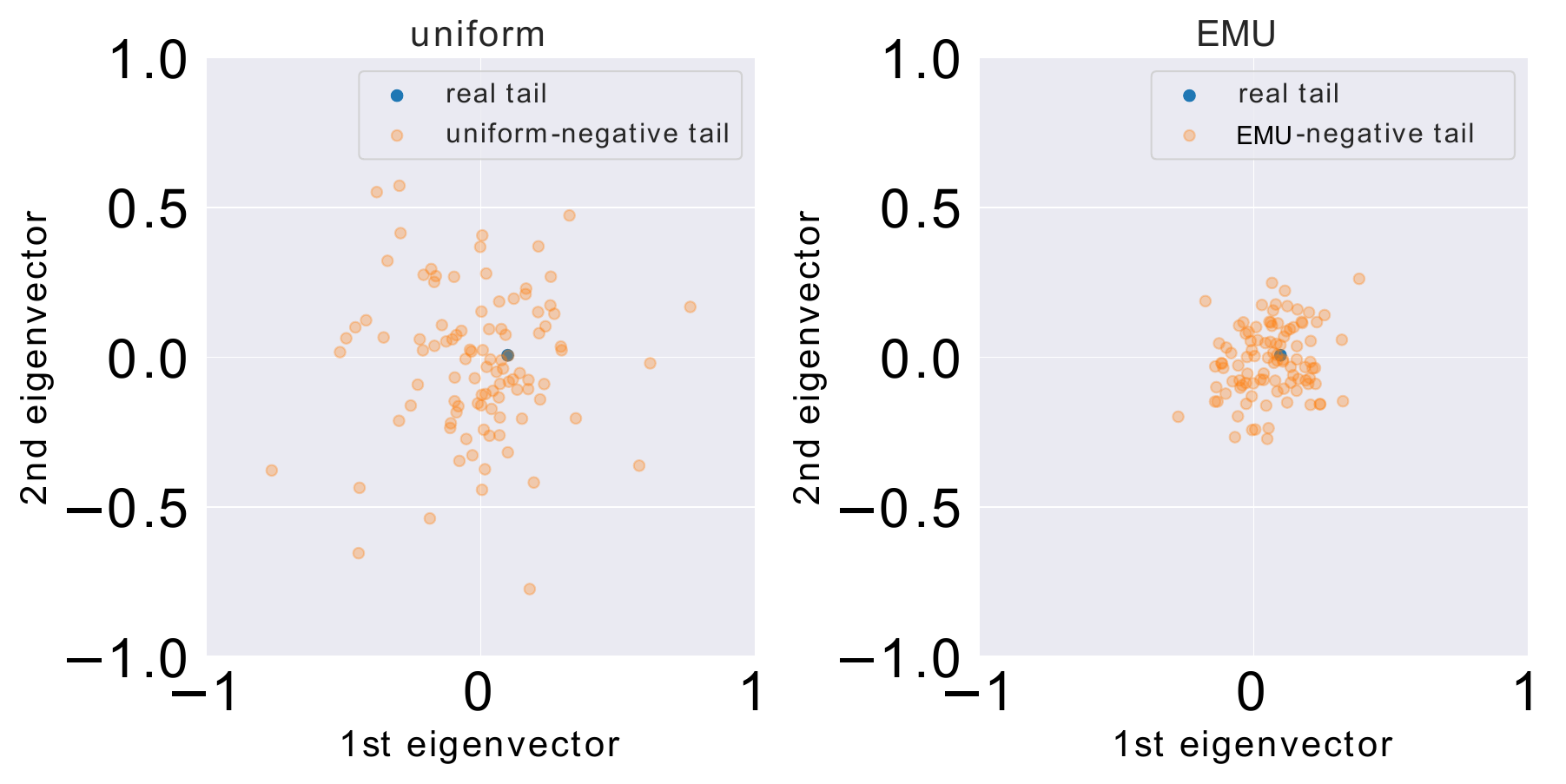}   
  \caption{Results of the analysis of \textsc{EMU} of DistMult model trained on FB15k-237 (Left) and WN18rr (Right) dataset.
           Left: The distribution of real-tail and uniformly-sampled negative-tail in terms of the 1st and 2nd PCA components.
           Right: The distribution of real-tail and \textsc{EMU} negative-tail in terms of the 1st and 2nd PCA components. }           
    \label{fig:Mutup_analysis}
\end{figure}

\begin{table*}[t!]
\centering
\resizebox{\textwidth}{!}{
\begin{tabular}{ll|rr|rr|rr}
\toprule
   &           &    FB15K-237 &     & WN18RR  & & YAGO3-10 &   \\
Model & Method &    MRR &        HITS@10&    MRR &        HITS@10&    MRR &        HITS@10 \\
\midrule
ComplEX  & uni\_\textsc{EMU} &  {\bf 0.344} &  {\bf 0.532} & {\bf 0.473} & {\bf 0.547} & {\bf 0.437} & {\bf 0.638} \\
\citep{complexN3}  & uni\_\textsc{Mixup} &  0.324 &  0.517 & 0.470 & {\bf 0.547} & 0.418 & 0.605 \\
\midrule   
DistMult  & uni\_\textsc{EMU} &  {\bf 0.332} &  {\bf 0.513} & {\bf 0.446} & {\bf 0.523} & {\bf 0.403} & {\bf 0.601} \\
\citep{distmult} & uni\_\textsc{Mixup} &  0.319 & 0.507 & 0.441 & 0.517 & 0.402 & 0.597 \\
\midrule   
RotatE  & uni\_\textsc{EMU} &  {\bf 0.329} &  {\bf 0.514} & 0.453 & 0.525 & {\bf 0.365} & {\bf 0.555} \\
\citep{rotate} & uni\_\textsc{Mixup} &  0.281 & 0.454 & 0.278 & 0.428 & 0.174 & 0.318 \\
\midrule   
TransE  & uni\_\textsc{EMU} &  {\bf 0.323} & {\bf 0.503} & 0.216 & 0.493 & {\bf 0.255} & {\bf 0.436} \\
\citep{transe} & uni\_\textsc{Mixup} &  0.269 & 0.421 & 0.050 & 0.139 & 0.063 & 0.102 \\
\bottomrule
\end{tabular}
} 
\caption{MRR and Hit@10 of the results on FB15K-237, WN18RR, and YAGO3-10 datasets. "uni\_\textsc{EMU}" means the uniform negative sampling with \textsc{EMU} and "uni\_\textsc{Mixup}" means the uniform negative sampling with \textsc{Mixup}. The shown results are the average of three trials of the randomly determined initial weights.} 
\label{tab:mixup_results}
\end{table*}

\subsection{\label{sec:MixupCmp}Comparison to \textsc{Mixup}}

There is a well-known approach called \textsc{Mixup}, which shares a similar philosophy with the \textsc{EMU} but is based on a different theoretical background (see also \autoref{sec:related-work}). 
To compare their performance, we replaced the Embedding Mutation step with \textsc{Mixup} and present the results in \autoref{tab:mixup_results} \footnote{Note that this procedure is already different from the original implementation of \textsc{Mixup} which corrupts the original data, not that of embedding dimension.}. The findings consistently demonstrate that \textsc{EMU} outperforms \textsc{Mixup}. We hypothesize that the linear nature of \textsc{Mixup}-generated examples limits the magnitude of gradients while preserving their direction, thereby restricting its effectiveness. In contrast, \textsc{EMU} overcomes this limitation by generating updates that can explore multiple directions, thereby enhancing model training

\subsection{Application to NBFNet\label{sec:EMU-NBFNet}}

In this section, we apply EMU to Neural Bellman-Ford Networks (NBFNet) \citep{Zhu2021NeuralBN}, one of the state-of-the-art (SOTA) models for knowledge graph tasks. The results on the FB15K-237 dataset are presented in \autoref{tab:NBFNet_results-mean}, demonstrating that EMU remains effective even when applied to the latest models, achieving performance competitive with the current SOTA methods\footnote{While the performance improvement is relatively smaller compared to other models, this may be attributed to the additional structure of NBFNet, specifically the Bellman-Ford iteration, which differs from the architectures used in other experiments.}.

\begin{table*}[h!]
\centering
\resizebox{0.9\textwidth}{!}{
\begin{tabular}{l|llll}
\toprule
                &             & FB15K-237   &             & \\
Model           & MRR         & HITS@1      & HITS@3      & HITS@10 \\
\midrule
NBFNet (reported)        & 0.415       & 0.321       & 0.454       & 0.599 \\
NBFNet (our experiments) & 0.414 $\pm$ 0.003 & 0.321 $\pm$ 0.003     & 0.453 $\pm$ 0.003     & 0.596 $\pm$ 0.002\\
NBFNet w.t. EMU          & {\bf 0.419 $\pm$ 0.002} & {\bf 0.326 $\pm$ 0.002} & {\bf 0.460 $\pm$ 0.003} & {\bf 0.601 $\pm$ 0.002} \\
\bottomrule
\end{tabular}
} 
\caption{MRR, Hit@1, Hit@3, and Hit@10 of the results of NBFNet on FB15K-237 dataset. The results are the average with the standard deviation of three trials of the randomly determined initial weights.} 
\label{tab:NBFNet_results-mean}
\end{table*}

\section{\label{sec:related-work}Related Work}

\paragraph{KGE Models} KGE models such as TransE\citep{transe}, DistMult \citep{distmult,conve,distmult3}, ConvE \citep{conve}, ComplEX \citep{complex,complexN3}, RotatE \citep{rotate} 
are commonly used when solving the knowledge base completion task.
Each model implements a scoring function mapping a given triple to a real-valued number. These models also differ in the embedding spaces used to learn the latent embedding, for instance RotatE \citep{rotate} utilizes the complex vector space. The new KGE model investigation is still actively conducted \cite{HAKE,Abboud2020BoxEAB,Zhu2021NeuralBN,Tran2022MEIMME,zhang2023adaprop,zhu2024net,zhou2024less} and more comprehensive review can be found in \cite{ge2024knowledge}. 
 
\paragraph{Negative sampling} While training a KGE model for the link prediction task, it is essential to sample high-quality negative data points adequately from the graph. Poor quality negative samples can hinder the performance of KGE models by failing to guide the model during training. With this in mind, many approaches were proposed for generating better-quality negative samples, i.e., hard negatives. The earliest sampling method is Uniform Sampling \citep{transe}. Another commonly used method relies on Bernoulli Sampling where the replacement of the heads or tails of the triples follows the Bernoulli distribution. \citep{Wang_Zhang_Feng_Chen_2014}. Newer methods that are based on Generative Adversarial Networks (GAN) are also used such as KBGAN \citep{cai-wang-2018-kbgan} and IGAN \citep{igan} where the generator is adversarially trained for the purpose of providing better quality negative samples where a KGE model is used as the discriminator. Building on this NScasching \citep{zhang2019nscaching} proposed a distilled version of GAN-based methods by creating custom clusters of candidates entities used for the negative samples. Structure Aware Negative Sampling (SANS) \citep{ahrabian-etal-2020-structure} leverages the graph structure in the KG by selecting negative samples from a node's k-hop neighborhood. 
In addition, the subject continues to be actively studied \citep{zhang2021simple,ISLAM2022109083,rel-enhanced-neg,lin2023hierarchiral,yao2023entity,chen2023negative,qiao2023improving}
. 
Unlike the prior work mentioned above, \textsc{EMU} generates hard negative samples, distinct from traditional approaches aimed at identifying more difficult negative samples. 
Furthermore, an additional benefit of \textsc{EMU} is its compatibility with any of the above negative sampling methods, allowing for seamless integration. 

\section{Discussion and Conclusion}

In the present study, we theoretically identified a condition of the nice negative sample distribution leading to a near-optimal embedding of KGE and identified a sufficient condition for near optimal embedding. Based on the condition, we proposed \textsc{EMU} which aims to generate the negative samples following the condition easily and efficiently. 
Our comprehensive experimental findings also demonstrate that \textsc{EMU} consistently outperforms all the baseline negative sampling methods, including uniform sampling, SAN, and NSCaching in almost all the KGE models and datasets. We also observed that \textsc{EMU}'s efficacy was largely invariant across embedding models and datasets. Moreover, the experiments shows that \textsc{EMU} enables to achieve comparable performance to models with embedding dimensions that are five times larger. Our analysis showed that \textsc{EMU} generates negative samples that are closer to true samples in terms of cosine-similarity, and that the generated samples exhibit a more isotropic distribution around the true sample in the embedding space compared to other methods, which is a part of the condition for the near optimal embedding for KGE. 
Although \textsc{EMU} involves tuning a few hyper-parameters, we observed that its performance is not heavily reliant on them (refer to \autoref{sec:hyper-param-mutup}). 
While EMU was developed for KGE tasks, its simple structure enables its application to other tasks, such as graph node classification and representation learning. Exploring these applications remains a promising direction for future work.

\section*{Acknowledgements}
The authors would like to thank their colleagues at NEC Laboratories Europe, in particular Dr. Kiril Gashteovski, Dr. Carolin Lawrence, and Professor Mathias Niepert, for their valuable feedback and encouraging support throughout this work.

\newpage

\section{Limitations} \textsc{EMU} scope is restricted to KG missing link prediction model trained using the cross-entropy loss function with negative samples. 
It cannot be applied to neither 1-VS-ALL method nor the other loss functions for the moment. 

\section*{Potential Broader Impact and Ethical Aspects}
This paper presents work whose goal is to advance the field of Machine Learning, in particular, knowledge-graph link prediction. There are many potential societal consequences of our work in the far future due to the generic nature of pure science, none which we feel must be specifically highlighted here. 
While we do not foresee a substantial ethical concern in our proposed strategies, there may be a side effect resulting from the feature mutation. 
It is thus important to monitor and evaluate potential bias that may arise during the model training process.
Note that we utilized only publicly available KG datasets, and thus, there is no concerns regarding compliance with the General Data Protection Regulation (GDPR) law\footnote{\url{https://gdpr.eu/}}


\bibliography{kg_general_bib, datasets_bib, general_bib}
\bibliographystyle{tmlr}

\newpage
\appendix

\section{Additional Related Work}

\paragraph{Model Regularization Methods for Classification Tasks} 

To obtain a good representation of the embedding vector of a machine learning model, it is common to consider regularization methods. 
In particular, there have been several regularization techniques for a better generalization power in the case of the cross entropy loss function. 
\textsc{Mixup} \citep{mixup} is one of the most popular and powerful regularization methods, originally developed for image and speech processing. This method generates new training samples by convexly mixing two different training data during the training, resulting in a network with a better generalization because of Vicinal Risk Minimization \citep{chapelle2000vicinal}. Consequently \textsc{Mixup} has gained popularity in computer vision \citep{swin, masvl, pvt} and voice recognition \citep{mixspeech, stemm}, among other fields \citep{mlpmixer,kalantidis2020hard,roy2022felmi,che2022mixkg}.
\textsc{CutMix} \cite{yun2019cutmix} is a variant of \textsc{Mixup} that combine two input images as \textsc{Mixup} but by cutting and pasting patches among images. Our Feature Mutation shares a similar philosophy but the crucial difference is that feature mutation combines positive and negative tails in "feature" space that has not yet been tried in any existing work as far as we know. 

 Label Smoothing \citep{labsmoothing,muller2019does} is also known as a very effective regularization method when combined with cross entropy loss. Label Smoothing prevents overconfident predictions from the model by artificially reducing the true labels to be less than unity.

\section{Further Property of \textsc{EMU} \label{sec:a1}}

The following proposition describes that \textsc{EMU} generates more difficult negative samples than the original ones. 
\begin{proposition}
  The generated negative samples generated by \textsc{EMU} are always closer to the positive samples than the original negative samples.
  \label{prop:2}
\end{proposition}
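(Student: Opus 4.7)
The plan is a direct algebraic calculation using the definition of \textsc{EMU} from \autoref{eq:mutup}. First I would substitute the definition into the deviation of the \textsc{EMU} sample from the positive, yielding
\begin{align}
  \tilde{\mathbf{z}}_{\rm EMU} - \mathbf{z}^+
  &= \lambda_{\rm EMU} \odot \mathbf{z}^+ + (\mathbf{1} - \lambda_{\rm EMU}) \odot \mathbf{z}^- - \mathbf{z}^+ \nonumber \\
  &= (\mathbf{1} - \lambda_{\rm EMU}) \odot (\mathbf{z}^- - \mathbf{z}^+).
\end{align}
Since $\lambda_{\rm EMU}$ is a binary vector with exactly $n_{\rm P}$ ones, the vector $\mathbf{1} - \lambda_{\rm EMU}$ acts as a coordinate projector that zeros out $n_{\rm P}$ of the $d$ components of $\mathbf{z}^- - \mathbf{z}^+$.

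The key step is then a component-wise comparison under any coordinate-wise norm. For the squared Euclidean norm,
\begin{equation}
  \| \tilde{\mathbf{z}}_{\rm EMU} - \mathbf{z}^+ \|^2
  = \sum_{i : \lambda_{{\rm EMU},i}=0} (z^-_i - z^+_i)^2
  \le \sum_{i=1}^{d} (z^-_i - z^+_i)^2
  = \| \mathbf{z}^- - \mathbf{z}^+ \|^2,
\end{equation}
with strict inequality whenever at least one mutated coordinate satisfies $z^-_i \ne z^+_i$, which holds generically for $n_{\rm P} \ge 1$. An analogous argument handles cosine similarity, where zeroing coordinates that differ between $\mathbf{z}^+$ and $\mathbf{z}^-$ increases the normalized inner product with $\mathbf{z}^+$; I would note this extension briefly so that the statement applies to both geometric notions of ``closeness'' used elsewhere in the paper (cf.\ \autoref{fig:Mutup_analysis_cos-sim}).

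There is no substantive obstacle here: the statement is essentially a consequence of the fact that \textsc{EMU} copies $n_{\rm P}$ coordinates of the positive directly into the negative, so the residual lives on a lower-dimensional coordinate subspace. The only mild care is to state the ``always closer'' claim precisely as a non-strict inequality with strictness under the mild non-degeneracy condition that the mutated coordinates are not already equal between $\mathbf{z}^+$ and $\mathbf{z}^-$; I would flag this as the reason for saying ``closer'' in the proposition rather than ``strictly closer.''
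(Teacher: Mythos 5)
Your proof is correct and is essentially identical to the paper's own argument in \autoref{sec:a1}: both substitute \autoref{eq:mutup} into the squared Euclidean distance, observe that $(\mathbf{1}-\lambda_{\rm EMU})\odot$ zeros out the $n_{\rm P}$ mutated coordinates of $\mathbf{z}^- - \mathbf{z}^+$, and conclude $d_{\rm EMU}\le d_{\rm PN}$ (the paper just reorders coordinates WLOG instead of summing over $\{i:\lambda_{{\rm EMU},i}=0\}$). Your added aside about cosine similarity is not needed for the proposition and is not quite as immediate as you suggest (replacing coordinates of $\mathbf{z}^-$ by those of $\mathbf{z}^+$ need not monotonically increase the normalized inner product in full generality), but the Euclidean argument, which is all the proposition requires, is sound.
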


\begin{proof}
The Euclidean distance between a positive sample ${\bf z}^+ = \{z^+_i \}_{i=1,\dots, d}$ and a vanilla negative sample ${\bf z}^- = \{z^-_i \}_{i=1,\dots, d}$ can be written as: 
\begin{equation}
   d_{\rm PN} = \sqrt{\sum_{i=1}^d (\mathbf{z}_i^{+} - \mathbf{z}_i^{-})^2} 
   .
\end{equation}
On the other hand, the distance between a positive sample and a negative sample generated using EMU is 
\begin{align}
   d_{\rm EMU} &= \sqrt{\sum_{i=1}^d (\mathbf{z}_i^{+} - \mathbf{z}_i^{\rm EMU})^2} 
   \nonumber
   \\
   &= \sqrt{\sum_{i=1}^d \left(\mathbf{z}_i^{+} - \{\lambda_i \mathbf{z}_i^+ + (1 - \lambda_i) \mathbf{z}_i^- \} \right)^2 } 
   \nonumber
   \\
   &= \sqrt{\sum_{{i = n_{\rm T} + 1}}^d (\mathbf{z}_i^{+} - \mathbf{z}_i^{-})^2} \leq d_{\rm PN}, 
   \label{eq:Mutup_eval}
\end{align}
where in the last line, we assume the first $n_{\rm P}$ components in $\lambda_{\rm EMU}$ is unity and the others are zero: $\lambda_{\rm EMU} = \{1, 1, \cdots, 1, 0, \cdots, 0 \}$, without loss of generality. 

The above equations show that \textsc{EMU} enables to generate hard negative samples than the original ones. 
\end{proof}

\section{Proof of Theorem \ref{co:1} \label{sec:a2}}

In the following, we provide the proof of Theorem \ref{co:1}. 
\begin{proof}
   For simplicity, we assume the first axis of the coordinate is aligned with vector  $\mathbf{A}$: $\mathbf{A} \equiv a_1 \mathbf{e}_1$ and we also represent ${\bf B}$ as $\mathbf{B} = \{ b_i \}_{i=1,\dots,N} = \{ \bar{b} + \Delta b_i \}_{i=1,\dots,N}$ where $\bar{b}$ is the mean-value of the component of the vector $\mathbf{B}$. Then, the angle between ${\bf A}$ and ${\bf B}$ can be written as: 
   \begin{align}
       \cos \theta_0 &\equiv \frac{{\bf A} \cdot {\bf B}}{|{\bf A}||{\bf B}|} = \frac{b_1}{\sqrt{\sum^N_{i=1} (\bar{b} + \Delta b_i)^2}}
       \nonumber \\
       &= \frac{b_1}{\bar{b} \sqrt{N}} \ \left[1 + \frac{1}{N} \sum^N_{i=1} \left(\frac{\Delta b_i}{\bar{b}}\right)^2  \right]^{-1/2}. 
   \end{align}
   Likewise, the angle between ${\bf A}$ and ${\bf B}$ can be written as: 
   \begin{align}
       \cos &\theta_{\rm EMU} \equiv \frac{{\bf A} \cdot {\bf B}_{\rm EMU}}{|{\bf A}||{\bf B}_{\rm EMU}|} = \frac{\lambda _1 b_1}{\sqrt{\sum^N_{i=1} \lambda_i^2 (\bar{b} + \Delta b_i)^2}}
       \nonumber \\
       &= \frac{\lambda_1 b_1}{\bar{b} \sqrt{n_r}} \ \left[1 + \frac{1}{n_r} \sum^N_{i=1} \lambda_i^2 \left[\frac{2 \Delta b_i}{\bar{b}} + \left(\frac{\Delta b_i}{\bar{b}}\right)^2 \right]  \right]^{-1/2}. 
   \end{align}
   In the limit of $\Delta b \ll \bar{b}$, noting that $\lambda_i^2 = \lambda_i$,
   \begin{align}
       \frac{\cos \theta_{\rm EMU}}{\cos \theta_0} &= \lambda_1 \sqrt{\frac{N}{n_r}} \left[1 - \frac{1}{n_r} \sum^N_{i=1} \lambda_i \epsilon_i \right] + O(\epsilon_i^2)
       \nonumber \\
       &= \lambda_1 \sqrt{\frac{N}{n_r}} \left[1 - \frac{\zeta}{n_r} \right] + O(\epsilon_i^2). 
       \label{eq:colo}
   \end{align}
   Here, $\epsilon_i \equiv \Delta b_i/\bar{b}$ and $\zeta \equiv \sum^N_{i=1} \lambda_i \epsilon_i$, and we derived the first equation by the Taylor expansion up to the first-order in $\epsilon_i$. According to the central limit theorem, $\zeta$ follows the Gaussian distribution in the case of sufficiently large $N$, and hence $\cos \theta_{\rm EMU}$ adheres to Gaussian distribution centered around $\sqrt{N/n_r} \cos \theta_0$, as long as $\lambda_1 \neq 0$. 
\end{proof}

\section{Proof of Lemma \ref{lm:emu-neg} \label{sec:lem-proof}}
\begin{proof}
  First, we consider the case of the real-world data case with only limited data. According to \autoref{th:iso} and \autoref{eq:mutup}, the negative sample distribution can be written as a linear summation of an isotropic distribution as: 
  \begin{align}
    &\tilde{p}_{n, {\rm EMU}}(\mathbf{\tilde{z}}_{\rm EMU} |v^{h r}) 
    = \sum_{i=1}^N f_{\rm iso}(\Delta \mathbf{z}^{\lambda}|\mathbf{z}^{t,+}), 
    \nonumber
    \\
    &\mathbf{z}^{t, +} \sim \tilde{p}_d(\mathbf{z}^{t,+}|\mathbf{v}^{hr}), 
    \label{eq:emu-neg-distribution-data}    
  \end{align}
  where $\tilde{p}_n, \tilde{p}_d$ are the expression of $p_n$ and $p_d$ in the data space and $f_{\rm iso}(x|y)$ is an isotropic function around $y$. 
  Note that in the above equation, it is assumed the following relation is always satisfied: $\tilde{z}_{\rm EMU} = \mathbf{z}^{t,+} + \Delta \mathbf{z}^{\lambda}$. 
  If moving back to the idealized data space, \autoref{eq:emu-neg-distribution-data} reduces to: 
  \begin{align}
    &p_{n, {\rm EMU}}(\mathbf{\tilde{z}}_{\rm EMU} |v^{h r}) \nonumber \\
    &= A \int d\mathbf{z}^{t,+} f_{\rm iso}(\Delta \mathbf{z}^{\lambda}|\mathbf{z}^{t,+}) p_d(\mathbf{z}^{t,+}|\mathbf{v}^{hr})
    ,
    \label{eq:emu-neg-distribution-0}
  \end{align}
  where $A$ is a constant. 
  Then, we obtain the \autoref{lm:emu-neg} if explicitly demanding the relation: $\tilde{z}^{\rm EMU} = \mathbf{z}^{t,+} + \Delta \mathbf{z}^{\lambda}$, by considering the Dirac delta function. Note that the \autoref{lm:emu-neg} also assume that $f_{\rm iso}$ is a rapidly decaying function with a typical scale $\Delta \bar{z}$ which is the scale implicitly given in Proposition \ref{prop:2}. 
\end{proof}

\section{Unbounded Label Smoothing for Cross-Entropy Loss Function\label{sec:ULS}}

Label smoothing is a well-known technique used to regularize classifier models \citep{labsmoothing}. It is originally proposed to address the overconfidence issue that certain classifiers such neural networks may exhibit during training. It works by smoothing the class label as follows: 
\begin{equation}
    \mathbf{y}_{\rm LS} = \hat{\mathbf{y}} (1 - \beta_{\rm LS}) + \beta_{\rm LS} / K,
    \label{eq:vLS}
\end{equation}
where $\hat{\mathbf{y}} = \{y_0, y_1, ..., y_K\}$ is the \textit{one-hot} label encoding,  $\beta_{\rm LS}$ is a label smoothing parameter that controls the model confidence, and $K$ is the number of classes. Note that the resultant smoothed label maintains the total sum equal to unity. However, when applied to problems with a high number of classes, label smoothing leads to small values for the negative class label (or elements for the contrastive learning case), which still induces an overly strong penalty on the \textsc{EMU} negative samples whose vector component include the true positive sample vector that should not be penalized. To address this issue, we propose a new approach called \textit{Unbounded Label Smoothing} (Unbounded-LS), which is defined as follows: 
\begin{equation}
    y_{{\rm ULS},k} = 
    \left\{\begin{matrix}
    1 & \mathrm{if} \quad k \in (+), \\ 
    \beta & \quad \mathrm{otherwise},
    \end{matrix} \right.
\end{equation}
where $\beta$ is the softening parameter over the negative samples. The above modification of the negative sample labels does not affect the probabilistic interpretation of the model output, as it does not change the model output itself. 
Our unbounded LS discourages the model from penalizing the negative samples excessively.

\section{\label{sec:datasets}Datasets}

\begin{table}[ht!]
 \centering
     \begin{tabular}{ c|c|c|c } 
        \toprule
           Dataset & \#entities & \#relations & \#triples\\ 
          \midrule
         \textsc{FB15k-237} & 14,541 & 237 &  310,079\\ 
         \textsc{YAGO3-10} & 123,188 & 37 & 1,179,040\\ 
         \textsc{WN18-RR} & 40,943 & 11 & 93,003 \\
       \bottomrule
     \end{tabular}

     \caption{Knowledge Graph dataset statistics. \textit{training}, \textit{validation} and \textit{testing} refer to the number of triples under each split.  \label{tab:stats}}
   
\end{table}

\textbf{FB15k-237} \citep{fb15k-237} is a commonly used benchmark for Knowledge Graph link prediction tasks and a subset of Freebase Knowledge Base \citep{freebase}. 
FB15k-237 was created as a replacement for FB15k, a previous benchmark that was widely adapted until the dataset's quality came into question in subsequent work \citep{fb15k-237} due to an excess of inverse relations.

\textbf{YAGO3-10} \citep{yago3} is a subset of \textsc{YAGO} (Yet Another Great Ontology)\citep{yago}, a large semantic knowledge base that augments \textsc{WordNet} and which was derived from \textsc{Wikipedia} \citep{wiki}, \textsc{WordNet} \citep{Wordnet}, \textsc{WikiData} \citep{ahmadi2021wikidata}, and other sources. Because of its origins, \textsc{YAGO} entities are linked to \textsc{Wikidata} and \textsc{WordNet} entity types. The dataset contains information about individuals, such as citizenship, gender, profession, as well as other entities such as organizations and cities. The subset YAGO3-10 contains triples with entities that have more than 10 relations. 

\textbf{WN18RR} \citep{conve} is a link prediction dataset created from \textsc{WN18} \citep{transe}, which is a subset of \textsc{WordNet}, a popular large lexical database of English nouns, verbs, adjectives and adverbs. \textsc{WordNet} contains information about relations between words, such as \texttt{hyponyms}, \texttt{hypernyms} and \texttt{synonyms} \citep{Wordnet}. However, similarly to the issues that occurred in \textsc{FB15K}, many test triples in \textsc{WN18} are obtained by inverting triples from the training set. Therefore, \textsc{WN18RR} dataset was created in the same work as \textsc{FB15k-237}, in order to make a more challenging benchmark for link prediction.

\section{\label{sec:setups}Experiment Setup}

\paragraph{Training Settings} 

Here we describe the general settings we used to train all the models. The optimization was performed using Adam \citep{kingma2014adam} for $10^5$ iterations\footnote{The total iteration number is the same as the one used in the SAN repository \citep{ahrabian-etal-2020-structure} to reproduce their best result. 
} with 256 negative samples\footnote{In \autoref{sec:neg_sample_num}, the influence of the number of negative samples on the outcomes is analyzed, and it is demonstrated that \textsc{EMU} outperforms the uniform-sampling approach in almost all instances.}. The hyper-parameter tuning was performed with Optuna \citep{optuna_2019}. 
During the training, we monitored the loss over the validation set and selected the best model based on its performance on the validation set. 
For models trained with SAN negative samples, we utilized the default training setup from \citep{ahrabian-etal-2020-structure}. 

\paragraph{Evaluation Settings} To ensure that all the methods were evaluated under the same conditions, we utilized standard metrics to report results, specifically the Mean Reciprocal Rank (MRR) and Hits at K (H@K). If multiple true tails exist for the same (head, relation)-pair, we filtered out the other true triplets at test time. 
To minimize model uncertainty resulting from random seeds or multi-threading, we performed three trials for each experiment and reported the mean and standard deviation of the evaluation scores.

\paragraph{Baselines} 

Among all existing baselines, we consider vanilla uniform negative sampling \citep{transe} and SAN \citep{ahrabian-etal-2020-structure} as the most relevant to compare our work against. 
Additionally, we included NSCaching \citep{zhang2019nscaching} as another baseline method, with its results provided in \autoref{sec:nscaching} 
\footnote{
Due to the inherent intricacy involved in assessing the impact of different implementations (specifically, SAN-based and NSCaching-based codes) on performances, we relocated he results obtained with NSCaching to the Appendix.
}. The hyper-parameters for the baselines and \textsc{EMU} are tuned by utilizing Optuna \citep{optuna_2019}.

\paragraph{Implementation Settings} We modified the code originally developed by \citet{ahrabian-etal-2020-structure} to perform \textsc{Mixup} and \textsc{EMU} with SAN. As explained in \autoref{sec:method}, the models are trained using cross-entropy losses, incorporating one true tail sample and multiple negative samples. The optimization was performed using Adam \citep{kingma2014adam}. 
The L3-norm loss function is used on the embedding vectors for the models with the vanilla uniform negative sampling and SAN. 
The mini-batch size is set to 1000. 
To compute the embedded triplets for all the KG models, we used an Embedding layer with a hidden dimension of : $d = 100$. 
A more detailed hyper-parameters are provided in \autoref{tab:network-parameters} and \autoref{tab:network-parameters-YAGO3}. 
We tuned our hyperparameters, including the learning rate and the coefficient for weight-decay for baseline scores, through 10000 iterations on the FB15K-237 validation dataset using Optuna \citep{optuna_2019}. 
We adopted the officially provided configuration for the HAKE model setup \citep{HAKE}. The hyperparameters for \textsc{EMU} are detailed in \autoref{tab:HAKE-parameters}.

\paragraph{Computational Resources}
All the experiments other than HAKE were performed on one Nvidia GeForce GTX 1080 Ti GPU for each run. The experiments with HAKE were performed on one Nvidia GeForce RTX 3090 GPU for each run. The models were implemented by PyTorch 2.1.0 with CUDA11.8. Because of the additional gradient flow through negative sample components due to mutation, the total GPU memory usage of \textsc{EMU} becomes around 1.5 to 2 times larger than vanilla case in the case of DistMult with $(d,n) = (100, 256)$ where $d, n$ denote embedding dimension and negative sample number. Concerning the training time, although one-step duration becomes around twice longer\footnote{The length tends to increase with model complexity and size. However, \autoref{tab:feat-dim-dep} indicates that the performance gain by \textsc{EMU} also increases with the model complexity, as shown with HAKE}, we also found that \textsc{EMU} reaches its maximum performance much earlier than the case without \textsc{EMU}, leading to a shorter training time in the end. We emphasize that the additional computational resource allows us to achieve a comparable performance with the case using even 10 times larger embedding dimension, which requires at least 10 times larger computational resource (see also \autoref{sec:dim-dep}). 
Note that \textsc{EMU} does not require additional computational resources during inference. 

\begin{table*}[h]
  \centering
  \begin{tabular}{llrrrrr}
  \toprule
  Model & Method & Learning Rate & $\alpha$ & $n_{\rm P}/d$ & $\beta$ & $\gamma$ \\
  \midrule
ComplEX  & uni &  0.1 & n/a & n/a & n/a & $10^{-5}$ \\
   & SAN &  0.1 & n/a &  n/a & n/a & $10^{-5}$\\
   & SAN\_\textsc{EMU} & 0.1 & 0.34 & 0.92 & 0.12 & 0 \\   
   & uni\_\textsc{EMU} & 0.1 & 0.34 & 0.92 & 0.12 & 0 \\
\midrule   
DistMult  & uni & 0.1 & n/a & n/a & n/a & $10^{-5}$\\
 & SAN & 0.1 & n/a & n/a & n/a & $10^{-5}$\\
   & SAN\_\textsc{EMU} & 0.1 & 0.73 & 0.94 & 0.25 & 0 \\   
   & uni\_\textsc{EMU} & 0.1 & 0.73 & 0.94 & 0.25 & 0 \\
\midrule   
RotatE   & uni & 0.005 & n/a & n/a & n/a & $10^{-3}$ \\
  & SAN & 0.005 & n/a & n/a & n/a  & $10^{-3}$ \\
   & SAN\_\textsc{EMU} & 0.005 & 0.11 & 0.39 & 0.53  & 0 \\   
   & uni\_\textsc{EMU} & 0.005 & 0.11 & 0.39 & 0.53 & 0 \\
\midrule   
TransE  & uni &  0.005 & n/a & n/a & n/a  & $10^{-3}$\\
  & SAN &  0.005 & n/a & n/a & n/a  & $10^{-3}$\\   
   & SAN\_\textsc{EMU} & 0.005 & 0.11 & 0.39 & 0.53  & 0 \\
   & uni\_\textsc{EMU} & 0.005 & 0.11 & 0.39 & 0.53 & 0 \\
  \bottomrule
  \end{tabular}
  \caption{Hyper-Parameters for FB15K-237 and WN18RR dataset. $\alpha, \beta, \gamma$ are the coefficient of original Loss, negative label value of Unbounded LS, and the coefficient of L3-norm loss, respectively. }
  \label{tab:network-parameters}
\end{table*}

\begin{table*}[h]
  \centering
  \begin{tabular}{llrrrrr}
  \toprule
  Model & Method & Learning Rate & $\alpha$ & $n_{\rm P}/d$ & $\beta$ & $\gamma$ \\
  \midrule
ComplEX  & uni &  0.1 & n/a & n/a & n/a & $10^{-5}$ \\
   & SAN &  0.1 & n/a &  n/a & n/a & $10^{-5}$\\
   & SAN\_\textsc{EMU} & 0.1 & 0.536 & 0.804 & 0.193 & 0 \\   
   & uni\_\textsc{EMU} & 0.1 & 0.536 & 0.804 & 0.193 & 0 \\
\midrule   
DistMult  & uni & 0.1 & n/a & n/a & n/a & $10^{-5}$\\
 & SAN & 0.1 & n/a & n/a & n/a & $10^{-5}$\\
   & SAN\_\textsc{EMU} & 0.1 & 0.54 & 0.949 & 0.22 & 0 \\   
   & uni\_\textsc{EMU} & 0.1 & 0.54 & 0.949 & 0.22 & 0 \\
\midrule   
RotatE   & uni & 0.1 & n/a & n/a & n/a & $10^{-5}$ \\
   & SAN & 0.1 & n/a & n/a & n/a  & $10^{-5}$ \\
   & SAN\_\textsc{EMU} & 0.1 & 0.46 & 0.73 & 0.84 & 0 \\   
   & uni\_\textsc{EMU} & 0.1 & 0.46 & 0.73 & 0.84 & 0 \\
\midrule   
TransE  & uni &  0.1 & n/a & n/a & n/a  & $5 \times 10^{-5}$\\
   & SAN &  0.1 & n/a & n/a & n/a  & $5 \times 10^{-5}$\\   
   & SAN\_\textsc{EMU} & 0.1 & 0.11 & 0.39 & 0.53  & 0 \\
   & uni\_\textsc{EMU} & 0.1 & 0.11 & 0.39 & 0.53 & 0 \\
  \toprule
  \end{tabular}
  \caption{Hyper-Parameters for YAGO3 dataset. $\alpha, \beta, \gamma$ are the coefficient of original Loss, negative label value of Unbounded LS, and the coefficient of L3-norm loss, respectively. }
  \label{tab:network-parameters-YAGO3}
\end{table*}

\begin{table*}[h]
  \centering
  \resizebox{0.9\textwidth}{!}{
  \begin{tabular}{llrrrrrrrr}
  \toprule
  Dataset & Method & $d$ & $n_s$ & $n_b$ & Max-Step & Learning Rate & $\alpha$ & $n_{\rm P}/d$ & $\beta$ \\
  \midrule
FB15K-237  & uni & 1000 & 256 & 1024 & 100000 & $5 \times 10^{-5}$ & n/a & n/a & n/a \\
   & uni\_\textsc{EMU} & &&&& $5 \times 10^{-5}$  & 0.1 & 0.128 & 0.964 \\
\midrule   
WN18RR  & uni & 500 & 1024 & 512 & 80000 & $5 \times 10^{-5}$ & n/a & n/a & n/a\\
   & uni\_\textsc{EMU} &&&& & $5 \times 10^{-5}$ & 0.1 & 0.128 & 0.964\\
\midrule   
YAGO3-10   & uni & 500 & 256 & 1024 & 180000 & $2 \times 10^{-4}$ & n/a & n/a & n/a \\
   & uni\_\textsc{EMU} &&&& & $2 \times 10^{-4}$ & 0.1 & 0.128 & 0.964\\
  \toprule
  \end{tabular}
  } 
  \caption{Hyper-Parameters of HAKE model. $d, n_s, n_b$ denote hidden dimension size, negative sample number, and mini batch size, respectively. $\alpha, \beta, \gamma$ are the coefficient of original Loss, negative label value of Unbounded LS, and the coefficient of L3-norm loss, respectively. }
  \label{tab:HAKE-parameters}
\end{table*}

\section{\label{sec:main-result-table}A Full Description of Main Result}

In \autoref{tab:main_results} we provide the full description of our result visualized in \autoref{fig:Base_Results}. For the HAKE model, we provide the quantitative results in \autoref{tab:HAKE_results}. 

\begin{table*}[t!]
\small
\centering
\resizebox{\textwidth}{!}{
\begin{tabular}{ll|rr|rr|rr}
\toprule
   &           &    FB15K-237 &     & WN18RR  & & YAGO3-10 &   \\
Model & Method &    MRR &        HITS@10&    MRR &        HITS@10&    MRR &        HITS@10 \\
\midrule
ComplEX  & uni &  0.306$^{\pm 0.001}$ & 0.486$^{\pm 0.000}$ & 0.461$^{\pm 0.000}$ & 0.526$^{\pm 0.002}$ & 0.399$^{\pm 0.004}$ & 0.605$^{\pm 0.003}$ \\
\citep{complexN3}   & SAN &  0.275$^{\pm 0.000}$ & 0.437$^{\pm 0.001}$ & 0.467$^{\pm 0.001}$ & 0.530$^{\pm 0.001}$ & 0.318$^{\pm 0.002}$ & 0.496$^{\pm 0.004}$ \\
   & SAN\_\textsc{EMU} & 0.298$^{\pm 0.001}$ &  0.474$^{\pm 0.001}$ & 0.466$^{\pm 0.002}$ & 0.543$^{\pm 0.003}$ & 0.385$^{\pm 0.002}$ & 0.563$^{\pm 0.002}$ \\ 
   & uni\_\textsc{EMU} &  {\bf 0.344}$^{\pm 0.001}$ &  {\bf 0.532}$^{\pm 0.001}$ & {\bf 0.473}$^{\pm 0.003}$ & {\bf 0.547}$^{\pm 0.002}$ & {\bf 0.437}$^{\pm 0.001}$ & {\bf 0.638}$^{\pm 0.004}$ \\
\midrule   
DistMult  & uni &  0.299$^{\pm 0.001}$ & 0.476$^{\pm 0.001}$ & 0.428$^{\pm 0.001}$ & 0.489$^{\pm 0.002}$ & 0.345$^{\pm 0.001}$ & 0.538$^{\pm 0.004}$ \\
\citep{distmult}   & SAN &  0.259$^{\pm 0.001}$ & 0.415$^{\pm 0.001}$ & 0.425$^{\pm 0.001}$ & 0.481$^{\pm 0.002}$ & 0.251$^{\pm 0.002}$ & 0.428$^{\pm 0.001}$ \\
   & SAN\_\textsc{EMU} &  0.282$^{\pm 0.001}$ &  0.446$^{\pm 0.002}$ & 0.427$^{\pm 0.001}$ & 0.506$^{\pm 0.004}$ & 0.293$^{\pm 0.002}$ & 0.478$^{\pm 0.002}$ \\   
   & uni\_\textsc{EMU} &  {\bf 0.332}$^{\pm 0.001}$ &  {\bf 0.513}$^{\pm 0.001}$ & {\bf 0.446}$^{\pm 0.002}$ & {\bf 0.523}$^{\pm 0.003}$ & {\bf 0.403}$^{\pm 0.004}$ & {\bf 0.601}$^{\pm 0.004}$ \\
\midrule   
RotatE   & uni &  0.305$^{\pm 0.001}$ & 0.484$^{\pm 0.001}$ & {\bf 0.458}$^{\pm 0.001}$ & {\bf 0.549}$^{\pm 0.002}$ & 0.378$^{\pm 0.003}$ & 0.569$^{\pm 0.003}$ \\
\citep{rotate}   & SAN &  0.257$^{\pm 0.001}$ & 0.418$^{\pm 0.001}$ & 0.456$^{\pm 0.001}$ & 0.532$^{\pm 0.003}$ & 0.303$^{\pm 0.003}$ & 0.459$^{\pm 0.003}$ \\
   & SAN\_\textsc{EMU} &  0.282$^{\pm 0.000}$ & 0.455$^{\pm 0.001}$ & 0.451$^{\pm 0.001}$ & 0.516$^{\pm 0.002}$ & 0.363$^{\pm 0.002}$ & 0.535$^{\pm 0.002}$ \\   
   & uni\_\textsc{EMU} &  {\bf 0.329}$^{\pm 0.001}$ &  {\bf 0.514}$^{\pm 0.001}$ & 0.453$^{\pm 0.002}$ & 0.525$^{\pm 0.002}$ & {\bf 0.391}$^{\pm 0.001}$ & {\bf 0.609}$^{\pm 0.002}$ \\
\midrule   
TransE  & uni & 0.314$^{\pm 0.001}$ & 0.479$^{\pm 0.002}$ & 0.227$^{\pm 0.002}$ & 0.506$^{\pm 0.002}$ & 0.233$^{\pm 0.001}$ & 0.389$^{\pm 0.005}$ \\
\citep{transe}   & SAN &  0.299$^{\pm 0.001}$ &  0.460$^{\pm 0.002}$ & {\bf 0.237}$^{\pm 0.001}$ & {\bf 0.518}$^{\pm 0.002}$ & 0.222$^{\pm 0.002}$ & 0.375$^{\pm 0.001}$ \\   
   & SAN\_\textsc{EMU} &  0.281$^{\pm 0.000}$ & 0.450$^{\pm 0.003}$ & 0.202$^{\pm 0.001}$ & 0.493$^{\pm 0.001}$ & 0.221$^{\pm 0.003}$ & 0.383$^{\pm 0.001}$ \\
   & uni\_\textsc{EMU} &  {\bf 0.323}$^{\pm 0.001}$ & {\bf 0.503}$^{\pm 0.003}$ & 0.216$^{\pm 0.001}$ & 0.493$^{\pm 0.001}$ & {\bf 0.255}$^{\pm 0.002}$ & {\bf 0.438}$^{\pm 0.002}$ \\
\bottomrule
\end{tabular}
} 
\caption{MRR and Hit@10 of the results on FB15K-237, WN18RR, and YAGO3-10 datasets. "uni" means the uniform negative sampling, "SAN" means the structure aware negative sampling. The shown results are the average with the standard deviation of three trials of the randomly determined initial weights.} 
\label{tab:main_results}
\end{table*}

\begin{table*}[t!]
\small
\centering
\resizebox{\textwidth}{!}{
\begin{tabular}{ll|rr|rr|rr}
\toprule
   &           &    FB15K-237 &     & WN18RR  & & YAGO3-10 &   \\
Model & Method &    MRR &        HITS@10&    MRR &        HITS@10&    MRR &        HITS@10 \\
\midrule
HAKE  & uni &  0.308$^{\pm 0.002}$ & 0.493$^{\pm 0.000}$ & 0.436$^{\pm 0.002}$ & 0.487$^{\pm 0.003}$ & 0.452$^{\pm 0.005}$ & 0.651$^{\pm 0.004}$ \\
\citep{HAKE} & uni\_\textsc{EMU} &  {\bf 0.316}$^{\pm 0.001}$ &  {\bf 0.503}$^{\pm 0.001}$ & {\bf 0.453}$^{\pm 0.001}$ & {\bf 0.526}$^{\pm 0.002}$ & {\bf 0.499}$^{\pm 0.000}$ & {\bf 0.687}$^{\pm 0.001}$ \\
\bottomrule
\end{tabular}
} 
\caption{MRR and Hit@10 of the results of HAKE model trained on FB15K-237, WN18RR, and YAGO3-10 datasets. "uni" means the uniform negative sampling. The shown results are the average with the standard deviation of three trials of the randomly determined initial weights.}
\label{tab:HAKE_results}
\end{table*}


\section{Results using NScaching\label{sec:nscaching}}

This section presents the results obtained with \textsc{EMU} and NSCaching \citep{zhang2019nscaching}\footnote{
  We provided the results with NSCaching in the appendix rather than the main body because of differences in the implementation between the official repositories for SAN and NSCaching, which makes it difficult to compare those results equally.  
}. We modified the official NSCaching repository to enable the use of the cross entropy loss function and \textsc{EMU}. We used the same hyperparameters as those provided in \autoref{sec:method}, mini-batch size is 1000 and 256 negative samples. \textsc{EMU} parameters are provided in \autoref{tab:network-parameters-nscaching}. The results are provided in \autoref{tab:nscaching_results} which demonstrate that our \textsc{EMU} consistently improves the performance, even when using NSCaching\footnote{
  The obtained MMR and H@10 values may appear excessively good; however, we believe that this may be partly due to the NSCaching code implementation, although we cannot confirm this with certainty.
}. 

\begin{table*}[t!]
\centering
\begin{tabular}{ll|rr}
\toprule
Model & Method & MRR & Hit@10 \\
\midrule
ComplEX   & NSCaching &  0.387$^{\pm 0.001}$ & 0.577$^{\pm 0.001}$ \\
  & NSCaching\_\textsc{EMU} & {\bf 0.394$^{\pm 0.001}$} &  {\bf 0.585$^{\pm 0.005}$} \\   
\midrule   
DistMult   & NSCaching &  0.370$^{\pm 0.001}$ & 0.557$^{\pm 0.003}$ \\
   & NSCaching\_\textsc{EMU} &  {\bf 0.376$^{\pm 0.002}$} &  {\bf 0.565$^{\pm 0.000}$} \\   
\midrule
TransE   & NSCaching &  0.322$^{\pm 0.001}$ & {\bf 0.470$^{\pm 0.002}$} \\
   & NSCaching\_\textsc{EMU} &  {\bf 0.323$^{\pm 0.001}$} &  0.467$^{\pm 0.004}$ \\   
\midrule
RotatE   & NSCaching & n/a & n/a \\
   & NSCaching\_\textsc{EMU} & n/a & n/a  \\   
\bottomrule
\end{tabular}
\caption{MRR and Hit@10 of the results with NSCaching code trained using FB15K-237. 
"NSCaching" means the NSCaching negative smapling. The shown results are the average with the standard deviation of three trials of the randomly determined initial weights. Note that the result of RotatE is omitted because RotatE is not provided in the original repository.}
\label{tab:nscaching_results}
\end{table*}

\begin{table*}[h]
  \centering
  \begin{tabular}{llrrrrr}
  \toprule
  Model & Method & Learning Rate & $\alpha$ & $n_{\rm P}/d$ & $\beta$ & $\gamma$ \\
  \midrule
ComplEX  & uni & $3 \times 10^{-4}$ & n/a & n/a & n/a & $10^{-5}$ \\
   & NSCaching &  $3 \times 10^{-4}$ & n/a &  n/a & n/a & $10^{-5}$\\
   & NSCaching\_\textsc{EMU} & $3 \times 10^{-4}$ & 0.44 & 0.34 & 0.32 & 0 \\
\midrule   
DistMult  & uni & $10^{-3}$ & n/a & n/a & n/a & $10^{-5}$\\
   & NSCaching & $10^{-3}$ & n/a & n/a & n/a & $10^{-5}$\\
   & NSCaching\_\textsc{EMU} & $10^{-3}$ & 0.68 & 0.17 & 0.16 & 0 \\   
\midrule   
TransE  & uni & $5 \times 10^{-4}$ & n/a & n/a & n/a & $2 \times 10^{-2}$\\
   & NSCaching & $5 \times 10^{-4}$ & n/a & n/a & n/a & $2 \times 10^{-2}$\\
   & NSCaching\_\textsc{EMU} & $5 \times 10^{-4}$ & 0.54 & 0.168 & 0.151 & 0 \\   
  \toprule
  \end{tabular}
  \caption{Hyper-Parameters of NScaching code trained using FB15K-237 dataset. $\alpha, \beta, \gamma$ are the coefficient of EMU Loss, negative label value of Unbounded LS, and the coefficient of L3-norm loss, respectively. }
  \label{tab:network-parameters-nscaching}
\end{table*}

\section{Negative Sample Number Dependence\label{sec:neg_sample_num}}

In the main body of this work, we maintained a fixed number of negative samples at 256. However, in \autoref{fig:neg_sample_num}, we depict the relationship between the optimal MRR and the number of negative samples employed. Our experiments were conducted using the FB15K-237. Notably, \textsc{EMU} demonstrated superior MRR values in most cases, with a notable increase in performance gains as the number of negative samples increased. 

\begin{figure*}[t]
  \centering
  \includegraphics[width = \textwidth,trim=0cm 0cm 0cm 0cm, clip]{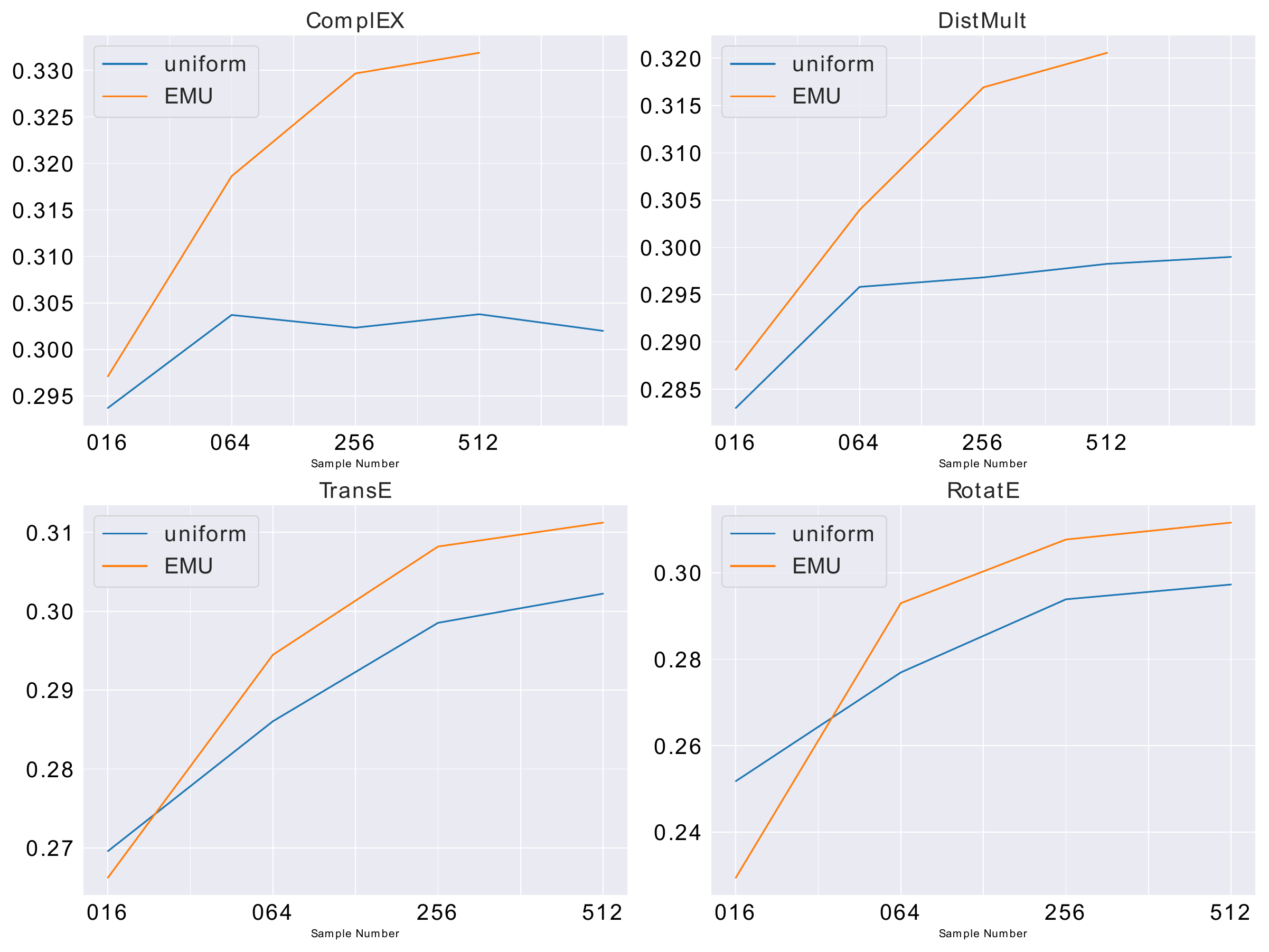} 
  \caption{The negative sample number dependence of MRR trained on FB15K-237. The right-edge of the ComplEX and DistMult of the uniform negative sampling case is the "1 VS ALL" results. 
  }
 \label{fig:neg_sample_num} 
\end{figure*}

\section{\label{sec:hyper-param-mutup}Hyper-Parameter Dependence Study}

In \autoref{tab:hyper-parameters-Mutup} illustrates the dependence of \textsc{EMU} performance on hyper-parameters: $\alpha, n_P/d$, and $\beta$. We considered the DistMult as a KGE model. The results indicate that the excessively large values of the coefficient of \textsc{EMU} loss, $\alpha$, are undesirable. Conversely, it is preferable to use a moderate value for the negative label value of Unbounded LS, $\beta$. Finally, the performance is relatively insensitive to the change of the mutation ratio, $n_P/d$, but exhibits a slight improvement as the value approaches the optimal one. 

In line with the above trend, for practical applications, we recommend initially testing the hyperparameters identified in this study, as outlined in \autoref{tab:network-parameters}, \autoref{tab:network-parameters-YAGO3}, and \autoref{tab:network-parameters-nscaching}. In particular, attention should be given to tuning the coefficient of the \textsc{EMU} loss, $\alpha$. If computational resources permit, the use of hyperparameter optimization tools, such as Optuna \citep{optuna_2019} and KGTuner \citep{zhang2022kgtuner}, is also advised.

\begin{table}[h]
  \centering
  \begin{tabular}{l|rr}
  \toprule
  ($\alpha, n_{\rm P}/d, \beta$) & MRR &  HITS@10\\
  \midrule
  (0.11, 0.914, 0.53) \ \textbf{baseline} & {\bf 0.333} & {\bf 0.513} \\
  (0.5, 0.914, 0.53) & 0.318 & 0.498 \\  
  (0.9, 0.914, 0.53) & 0.306 & 0.484 \\    
  (0.11, 0.1, 0.53)  & 0.326 & 0.509 \\
  (0.11, 0.5, 0.53)  & 0.327 & 0.504 \\
  (0.11, 0.914, 0.1) & 0.314 & 0.496 \\
  (0.11, 0.914, 0.9) & 0.326 & 0.501 \\  
  \toprule
  \end{tabular}
  \caption{Hyper-Parameters study results using FB15K-237 dataset with DistMult model. $\alpha, \beta, n_P$ are the coefficient of EMU Loss, negative label value of Unbounded LS, and the number of mutation components, respectively. }
  \label{tab:hyper-parameters-Mutup}
\end{table}

\section{Ablation Study}
\begin{table}[t]
\centering
\resizebox{0.65\columnwidth}{!}{
\begin{tabular}{ll|ll}
\toprule
Model & Ablation &    MRR &        HITS@10  \\
\midrule
ComplEX  & EMU &  {\bf 0.344} & {\bf 0.532}  \\
   & w/t ULS &  0.252 \ ({\bf -0.092}) & 0.411 \ ({\bf -0.121}) \\
   & w/t Emb.Mut. &  0.302 \ (-0.042) &  0.477 \ (-0.055)  \\
   & w/t \textsc{EMU} &  0.306 \ (-0.038) &  0.486 \ (-0.046)  \\   
\midrule   
DistMult  & EMU &  {\bf 0.332} &  {\bf 0.513}  \\
   & w/t ULS &  0.254 \ ({\bf -0.076}) & 0.415 \ ({\bf -0.098})\\
   & w/t Emb.Mut. &  0.300 \ (-0.032) & 0.477 \ (-0.036) \\
   & w/t \textsc{EMU}. &  0.311 \ (-0.021) & 0.489 \ (-0.024) \\   
\midrule   
RotatE  & EMU &  {\bf 0.329} &  {\bf 0.514} \\
   & w/t ULS & 0.236 \ ({\bf -0.093})  & 0.386 \ ({\bf -0.128}) \\
   & w/t Emb.Mut. &  0.312 \ (-0.017) &  0.496 \ (-0.018) \\
   & w/t \textsc{EMU} &  0.305 \ (-0.024) &  0.484 \ (-0.030) \\   
\midrule   
TransE  & EMU &  {\bf 0.323} & {\bf 0.503}  \\
   & w/t ULS &  0.260 \ ({\bf -0.063}) &  0.423 \ ({\bf -0.080}) \\   
   & w/t Emb.Mut. &  0.308 \ (-0.015) & 0.491 \ (-0.012) \\
   & w/t \textsc{EMU} &  0.314 \ (-0.009) & 0.479 \ (-0.024) \\   
\bottomrule
\end{tabular}}
\caption{Ablation study results on FB15K-237. The number in the parentheses are the difference from the "EMU" results. "ULS" means Unbounded Label Smoothing, and "Emb.Mut." means Embedding Mutation.}
\label{tab:ablation}
\end{table}
In this subsection, we present the results of our ablation study to understand the individual contributions of the two main components of \textsc{EMU}, i.e. the Embedding Mutation and the Unbounded-LS. To achieve this goal, we used the FB15k-237 dataset as a reference benchmark and performed a set of experiments by decoupling the embedding mutation from the Unbounded-LS. 
We trained the KGE models under three different scenarios: 1) the \textit{EMU}, which represents the proposed \textsc{EMU} combining Embedding Mutation and Unbounded-LS; 2) the baseline without Unbounded-LS (\textit{w/t Unbounded-LS}); 3) the baseline without the Embedding Mutation (\textit{w/t Emb.Mut.)}, and 4) the case without \textsc{EMU} (\textit{w/t \textsc{EMU}}). The aim of the experiments was to compare the performance reduction by removing one of the components, thereby gaining insights into their relative importance.

The results obtained from the ablation study are presented in \autoref{tab:ablation}, with the first two columns indicating the target model and the experiment setup, and the last two columns showing the MRR and HITS@10 results with the performance loss compared to the baseline. Our results consistently demonstrate that Unbounded-LS has a strong impact on all models\footnote{We also compared Unbounded-LS and vanilla LS \citep{labsmoothing} in \autoref{sec:VLS_ULS} and found that Unbounded-LS is more effective than the usual LS}. This is quite natural because \textsc{EMU} without Unbounded-LS penalizes not only pure negative samples but also true sample embedding because of Embedding Mutation. We also hypothesize that the effectiveness of Unbounded-LS can also stem from its ability to effectively allow large gradient flow values from negative samples. This is attributed to the relatively large negative sample labels (typically larger than 0.1) and the tendency of Embedding Mutation to create harder negatives, which results in larger loss values. The resulting gradients affect both the positive and negative sample components, ultimately leading to an improved representation of their embedding. 
In conclusion, Embedding Mutation combined with Unbounded-LS consistently (\textsc{EMU}) improves performance of multiple and diverse models. 

\section{\label{sec:VLS_ULS}Comparison between Vanilla LS and Unbounded LS}

In this study we proposed the unbounded label-smoothing (LS) technique. To assess its efficacy, we also trained our models using vanilla LS \citep{labsmoothing} with a label smoothing parameter of 0.2. The result is provided in \autoref{tab:VLS_ULS} which demonstrate the clear speriority of Unbounded LS for all cases. 

\begin{table*}[h]
\centering
\begin{tabular}{ll|ll}
\toprule
Model & Ablation &    MRR &        HITS@10  \\
\midrule
ComplEX  & Unbounded LS &  {\bf 0.344} & {\bf 0.532}  \\
\citep{complex} & Vanilla LS (w/t ULS) &  0.262 \ ({\bf -0.082}) &  0.423 \ ({\bf -0.109})  \\   
\midrule   
DistMult  & Unlabeled LS &  {\bf 0.332} &  {\bf 0.513}  \\
\citep{distmult} & Vanilla LS (w/t ULS) &  0.252 \ ({\bf -0.080}) &  0.410 \ ({\bf -0.103})  \\      
\midrule   
RotatE  & Unbounded LS &  {\bf 0.329} &  {\bf 0.514} \\
\citep{rotate} & Vanilla LS (w/t ULS) &  0.236 \ ({\bf -0.093}) &  0.382 \ ({\bf -0.132})  \\      
\midrule   
TransE  & Unbounded LS &  {\bf 0.322} & {\bf 0.503}  \\
\citep{transe} & Vanilla LS (w/t ULS) &  0.259 \ ({\bf -0.063}) &  0.423 \ ({\bf -0.080})  \\      
\bottomrule
\end{tabular}
\caption{A comparison between Unbounded LS and vanilla LS.}
\label{tab:VLS_ULS}
\end{table*}

\section{An Brief Introduction to \textsc{Mixup}\label{sec:mixup}}

In this section, we provide a brief introduction of \textit{Mixup} \citep{mixup}. \textsc{Mixup} is a simple regularization technique that constructs virtual training examples as:
\begin{align}
  \tilde{\mathbf{z}}_{\rm Mixup} &\equiv \lambda \mathbf{z}_i + (1 - \lambda) \mathbf{z}_j, 
  \label{eq:mixup}
\end{align}
where $\mathbf{z}_i, y_i$ are the $i$-th input and label data, $\lambda \sim \mathrm{Beta}(\alpha, \alpha)$ is a random scalar value controlling mixing ratio between the two samples, and $\alpha \in (0, \infty)$. \textsc{Mixup} is typically applied across the elements of a given batch, and randomly produces new virtual samples by linearly mixing two classes as shown in \autoref{eq:mixup}. 
While \textsc{Mixup} was originally proposed to address problems such as reducing memorization of corrupted labels and increasing the robustness to adversarial examples, we observed limitations to its performance when we extended it to embedding methods (refer to \autoref{tab:mixup_results}). We hypothesize that the linear nature of \textsc{Mixup}-generated example restricts the magnitude of gradients without changing their direction, which limits its effectiveness. On the other hand, \textsc{EMU} overcomes this limitation by producing updates that can take multiple directions and thus, enhances model training.

For the \textsc{Mixup} experiments, we simply replaced the embedding mutation into Mixup in \autoref{eq:mixup}. 
For simplicity, we set $\lambda \sim \mathrm{Beta}(\alpha, \beta)|_{\alpha = 2, \beta = 1}$. 
Note that here we did not set as $\alpha = \beta$, as in the original implementation \citep{mixup}, because we found that using different values of $\alpha$ and $\beta$ resulted in a significantly improved accuracy. We attribute this to the skewed probabilistic distribution that arises due to the different values of $\alpha$ and $\beta$, which allows for a higher ratio of negative samples than positive samples in the mixed-tail embedding vectors. 

\section{Detailed Explanation of the Application of EMU to NBFNet\label{sec:EMU-NBFNet-detail}}
\begin{table*}[h!]
\centering
\resizebox{0.45\textwidth}{!}{
\begin{tabular}{l|ccc}
\toprule
Model            & $\alpha$  & $n_{\rm P}/d$  & $\beta$  \\
\midrule
NBFNet w.t. EMU  & 1.0 & 0.94 & 0.25 \\
\bottomrule
\end{tabular}
} 
\caption{EMU hyper-parameter used for the experiments on NBFNet.} 
\label{tab:NBFNet_EMU_hp}
\end{table*}

The experiments on Neural Bellman-Ford Networks (NBFNet) \citep{Zhu2021NeuralBN}, described in \autoref{sec:EMU-NBFNet}, were conducted using the official repository\footnote{\url{https://github.com/DeepGraphLearning/NBFNet}}. We integrated the EMU loss function into the repository's code and executed the experiments using the default configuration. The hyperparameters for EMU are listed in \autoref{tab:NBFNet_EMU_hp}. The experiments were conducted using three different random seeds, which were consistently applied to both the baseline and EMU experiments. All experiments were performed on a single NVIDIA A100 GPU. In this experiment, we used the binary cross entropy loss for EMU loss because of the usage of binary cross entropy loss for the vanilla NBFNet. 

\section{Cosine Similarity Between Negative Samples}

\autoref{fig:cos_sim_negneg_DM} plots the cosine similarity among negative samples for \textsc{EMU}-\textsc{EMU}, uniform-uniform, and uniform-\textsc{EMU}. The results indicate that the similarity between uniform negative samples are consistently lower than that of \textsc{EMU} negative samples, suggesting that \textsc{EMU} generates more hard negative samples. 

\begin{figure*}[h]
  \centering
  \includegraphics[width=0.3\textwidth]{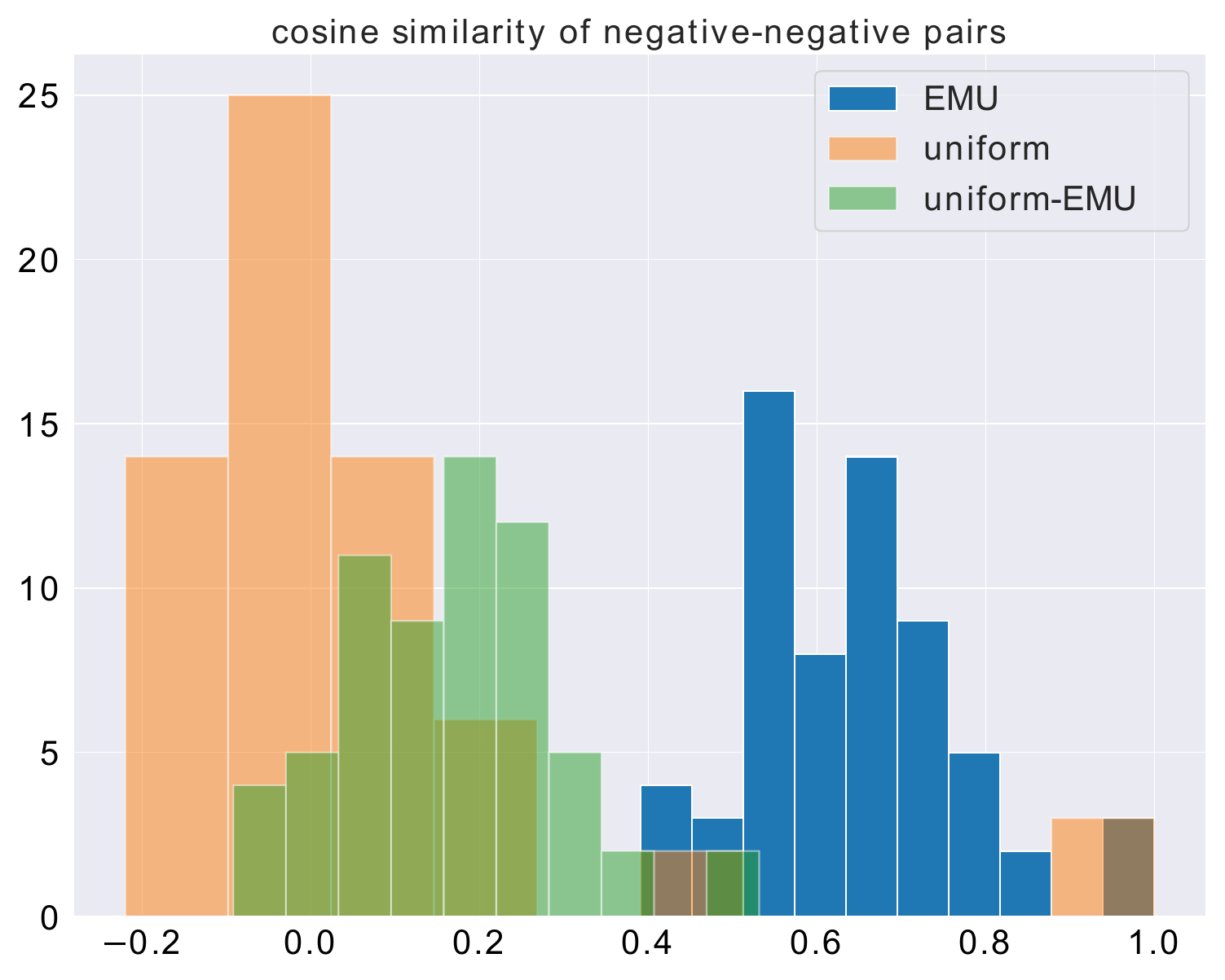}
  \caption{Cosine similarity of negative-negative tail pairs for DistMult with FB15k-237.}
    \label{fig:cos_sim_negneg_DM}
\end{figure*}

\end{document}